\newcommand{\cA}{\mathcal{A}}
\newcommand{\cV}{\mathcal{V}}
\newcommand{\cM}{\mathcal{M}}
\newcommand{\cO}{\mathcal{O}}
\newcommand{\cS}{\mathcal{S}}
\newcommand{\cC}{\mathcal{C}}
\newcommand{\rT}{\mathrm{T}}
\newcommand{\rC}{\mathrm{C}}
\newcommand{\R}{\mathbb{R}}
\newcommand{\Z}{\mathbb{Z}}
\DeclareMathOperator*{\argmin}{arg\,min}
\newtheorem{theorem}{Theorem}
\newtheorem{claim}{Claim}
\newtheorem{lemma}{Lemma}
\title{There is no Accuracy-Interpretability Tradeoff in Reinforcement Learning for Mazes}
\author{
  {Yishay Mansour}\thanks{Tel Aviv University and Google Research. 
}
  \and
  {Michal Moshkovitz}\thanks{Tel Aviv University. }
   \and
  {Cynthia Rudin}\thanks{Duke University.}
}
\date{}
\begin{document}

\maketitle

\begin{abstract}
Interpretability is an essential building block for trustworthiness in reinforcement learning systems. However, interpretability might come at the cost of deteriorated performance, leading many researchers to build complex models. Our goal is to analyze the cost of interpretability. We show that in certain cases, one can achieve policy interpretability while maintaining its optimality. We focus on a classical problem from reinforcement learning: mazes with $k$ obstacles in $\mathbb{R}^d$. We prove the existence of a small decision tree with a linear function at each inner node and depth $O(\log k+2^d)$ that represents an \emph{optimal} policy. Note that for the interesting case of a constant $d$, we have $O(\log k)$ depth. Thus, in this setting, there is no accuracy-interpretability tradeoff. To prove this result, we use a new ``compressing'' technique that might be useful in additional settings. 
\end{abstract}

\section{Introduction}

Deploying interpretable policies for reinforcement learning (RL) tasks, where the policy is humanly-understandable, is an important aspiration. However, one might worry that restricting only to interpretable policies might come at a price. Thus, a fundamental question arises: what is the price of interpretability? There are complex domains for which uninterpretable policies are known while an interpretable policy has not been found, for instance, protein folding  \citep[AlphaFold,][]{jumper2021highly} or games like Go \citep{silver2016mastering}, chess \citep{silver2018general} and Atari \citep{mnih2013playing}. However, the price of interpretability is unknown. Perhaps, an interpretable policy with good performance exists for these domains, only that no one has found one yet. 

To understand whether the same policy can be both optimal and interpretable, we return to one of the first domains RL was studied in --- mazes. In this domain, the agent plans a policy that takes it to a goal state. The agent moves up, down, left, or right at each state while not running into any obstacles. 

Mazes are significant both from the theoretical perspective we consider here and from the practical perspective: numerous real-world robotic problems can be cast as mazes. 
Examples include robots traversing warehouse floors \citep{everett1995real};
robots assembling products in factories \citep{fukuda1985flexibility} could also be modeled as a multi-dimensional maze, where the obstacles are constraints on how parts are assembled. In a sense, mazes can be used to model the decisions of AI-aided doctors providing drugs to critically ill patients over the timeframe they are in the intensive care unit of a hospital \citep{liu2017deep}; patients cannot receive more than a certain dose of a drug at a time, which can be viewed as an obstacle in a maze of doses that could be assigned to the patient.

In this paper, we analyze the price of interpretability for mazes. There are several types of interpretable models, but the gold standard is decision trees, which is our focus \citep[see, e.g.,][]{roth2019conservative, silva2020optimization}. The input to the decision tree is the current state, and the output, at the leaf, is the action to perform. At each inner node, there is a linear function of the current state. The complexity of the policy is the number of steps until reaching a decision, i.e., the depth of the tree. We show the existence of an interpretable \emph{optimal} policy and show an efficient algorithm for finding such a policy. 
The algorithm's output is a decision tree which implements an optimal policy, which guarantees that the agent reaches the goal state at minimum cost.
The depth of the decision tree is provably bounded, but the depth is not necessarily the smallest.
Namely, we prove the following. 
\begin{theorem}[main theorem, informal]\label{thm:rd_maze_adaptive_interpretable_policy_intro}
For every maze with $d$ dimensions and $k$ obstacles, there is an optimal interpretable (i.e., sparse decision tree) policy with depth $O(\log k+2^d)$, which  for a constant $d$  is $O(\log k)$. Additionally, if the starting state is sparse with at most $n$ non-zero coordinates, then the policy depth is $O(\log k+2^n)$.
\end{theorem}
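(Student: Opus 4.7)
The plan is to build the decision tree recursively by a divide-and-conquer (``compressing'') strategy on the obstacles, where each inner node is a single linear test on the state and each recursive call roughly halves the number of relevant obstacles while adding $O(1)$ to the depth. The $\log k$ term comes from the recursion, and the $2^d$ term from a case analysis at the leaves, once the obstacle count has been driven down to a constant.

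First, I would characterize an optimal policy. Because the obstacles are polyhedral and the motion primitives are a fixed finite set of directions in $\R^d$, one can show that the set of states on which each action is optimal is a finite union of polyhedra, and hence the optimal-action map is piecewise constant with polynomially many pieces bounded by $k$ and $d$. Computing this map via a visibility-graph or shortest-path argument is standard; what the theorem really demands is that it admits a sparse decision-tree representation.

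Second comes the recursive compressing step: given a subproblem consisting of a polyhedral region $R$ and at most $k$ relevant obstacles inside $R$, I would find a single hyperplane (implemented as a linear test at the decision-tree node) that slices $R$ into two subregions, each containing at most roughly $k/2$ obstacles, and such that the optimal action on each side can be recovered using only the obstacles on that side. A natural candidate is an axis-aligned median cut, chosen via a centerpoint-type argument on the obstacle set. Iterating this yields depth $O(\log k)$ until only $O(1)$ obstacles remain. At that point the leaf subtree must distinguish between actions based on the state's position relative to each face of each remaining obstacle across all $d$ coordinates; a direct orthant-style case analysis bounds this by $O(2^d)$. Adding the two contributions gives the stated $O(\log k + 2^d)$ depth. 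For the sparse-start refinement, I would observe that only $n$ coordinates of the state ever become nonzero on an optimal trajectory from a state with $n$ nonzero coordinates, so the dimension-dependent leaf overhead collapses from $2^d$ to $2^n$.

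The main obstacle is the compressing step: one must exhibit a single linear cut that balances obstacle count and simultaneously preserves optimality locally, i.e., so that ignoring the far-side obstacles does not change any optimal action on the near side. This is delicate because a globally optimal path from a near-side state can route around a far-side obstacle whose ``shadow'' reaches across the cut. The technical heart of the proof is therefore a non-interference lemma that picks the cut disjoint from obstacle interiors and from critical visibility rays, and then shows that the locally-computed action (using only near-side obstacles) coincides with the globally optimal action. Once this geometric lemma is established, the recursion and the $2^d$ leaf bound combine straightforwardly to give the claimed depth.
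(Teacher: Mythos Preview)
Your approach diverges from the paper's and contains a genuine gap at exactly the point you flag as ``the main obstacle.'' The non-interference lemma you need---that after a balanced cut the optimal action on the near side can be computed ignoring far-side obstacles---is in general false. Take $d=2$ with the start on the left of the cut and the goal on the right, and place a long vertical obstacle entirely on the right side with a single gap either near the top or near the bottom. Whether the optimal action from a left-side state is \texttt{up} or \texttt{down} depends on where that right-side gap is; no choice of cut ``disjoint from obstacle interiors and critical visibility rays'' repairs this, because the dependence is on global shortest-path structure, not on local visibility. Your sparse-start argument is also incorrect: it is not true that an optimal trajectory from a state with $n$ nonzero coordinates keeps all other coordinates at zero---a single obstacle on the axis can force a detour that activates a new coordinate.

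The paper's argument avoids localization entirely. It defines the finite set $\cC$ of \emph{corners} (intersections of obstacle-boundary coordinates and the goal coordinates), builds a reduced MDP $\cM'$ on $\cC$, and proves the key identity $V^*_{\cM}(s)=\min_{c\in S(s)\cap\cC} d(s,c)+V^*_{\cM'}(c)$, i.e., an optimal policy first walks to one of the at most $2^d$ corners of the current \emph{surface} $S(s)$ and thereafter moves corner-to-corner. The decision tree then has two stages: (i) locate the surface of $s$ by binary search on each coordinate list $L^i$, contributing the $O(\log k)$ depth (this, not obstacle-halving, is where the logarithm comes from); and (ii) compare the precomputed values $d(s,c)+V^*_{\cM'}(c)$ across the $\le 2^d$ surface corners via linear tests, contributing the $2^d$ term. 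The sparse refinement works by adding $0$ to every $L^i$, so a start with $n$ nonzero coordinates already lies on $d-n$ of the lists and its surface has at most $2^n$ corners; no claim about coordinates staying zero along the trajectory is needed.
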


\begin{wrapfigure}[16]{R}{0.45\textwidth}
    \centering
     \includegraphics[width=0.44\textwidth,trim={5cm 1cm 7cm 1cm},clip]{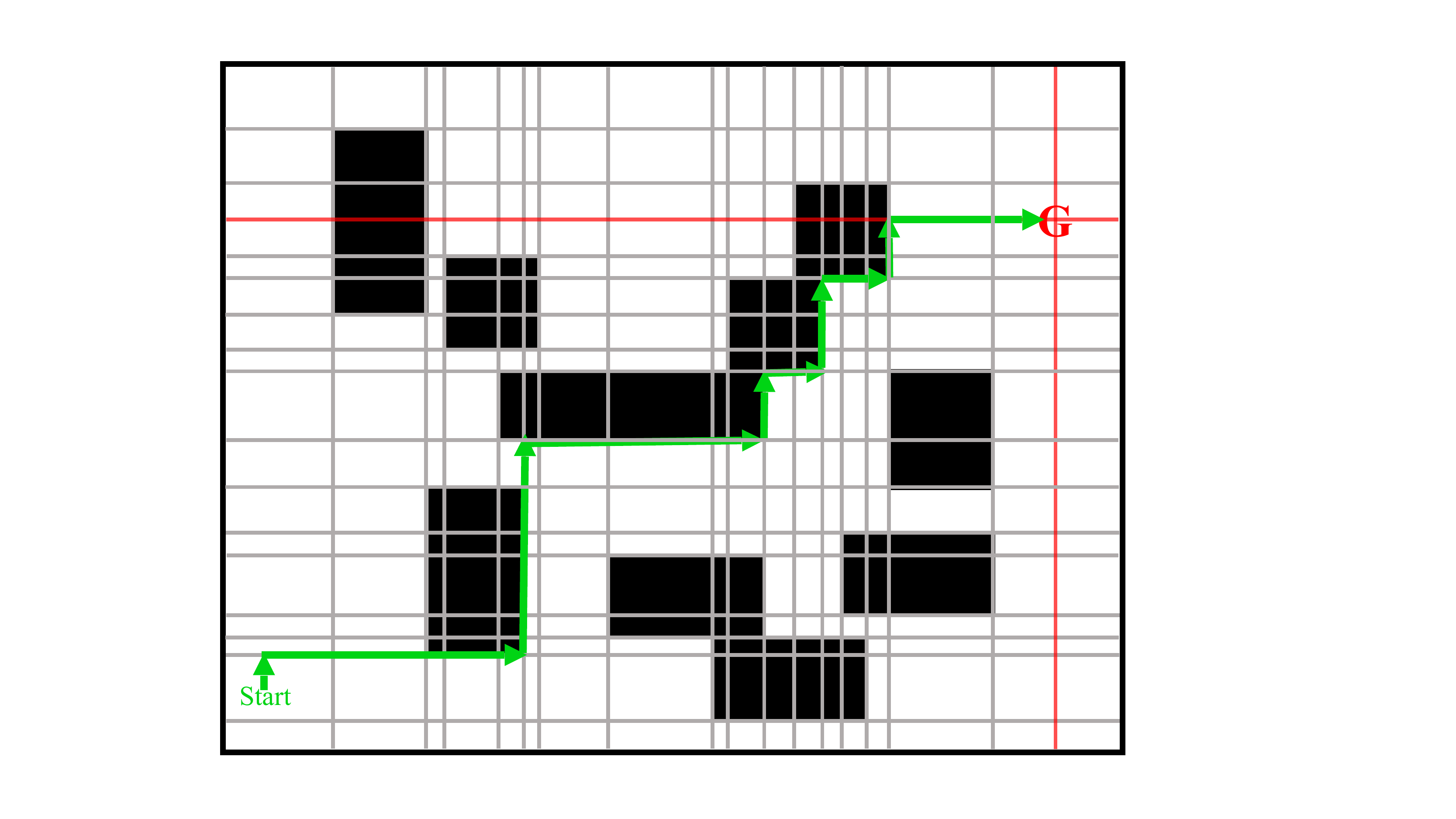}
    \label{fig:mdp_maze_intro}
\end{wrapfigure}
To prove the theorem, we find a small subset of states, which depends only on $k$ and $d$, and prove that there is a structured optimal policy that traverses from the start state to one of those states, and continue traversing between states in this set until reaching the goal state. This observation reduces the original maze problem to only those states, and standard algorithms can solve mazes with this smaller number of states. This ``compressing'' technique for finding interpretable models might assist in additional settings. See the figure on the right for an example of the agent's trajectory from \textcolor{ForestGreen}{Start} to \textcolor{red}{G}, using the new technique. The small subset of states are the intersections of the gray lines.

An advantage of the policy we construct is that the agent does not need to traverse the entire decision tree when encountering a new state. More specifically, the agent traverses the tree once to find the action for the starting state, and all the following actions are found deterministically in time $O(1)$. Additionally, we consider a ``uniform noise'' scenario, where the agent action might ``fail.'' More precisely, there is a fixed probability that the agent stays in the same place regardless of the action performed.  We show that under this specific noise model the optimal policy does not change, and hence our results immediately carry over. (See Appendix~\ref{sec:noise}.) 


\subsection*{Summary of Contribution}
\begin{itemize}
    \item \textbf{Main conceptual result}: There is an optimal interpretable policy for mazes, so no need to tradeoff interpretability with accuracy in mazes. 
    \item \textbf{Quantitatively}: For mazes in  $d$ dimension and $k$ obstacles, there is an optimal policy defined by a decision tree with depth $O(\log k +2^d)$. For the interesting case of 2-d and 3-d mazes this gives $O(\log k)$ depth. If the starting state is sparse with at most $n$ non-zero coordinates, then the depth of the decision tree  is $O(\log k+2^n)$. The same results hold in the ``uniform noise'' model.
\item \textbf{Running the agent:} The interpretable policy allows the agent to traverse efficiently in the decision tree --- no need to traverse the entire tree upon reaching every new state. For the first step the agent will spend time  $O(\log k +2^d)$, and in following steps it will spend only $O(1)$ time.
\item \textbf{Proof methodology:} We suggest a new ``compressing'' technique to build (and prove) an interpretable policy.
\end{itemize}

\subsection*{Related Work} 


\paragraph{Interpretable policies.} 
Several works 
\citep{roth2019conservative,bastani2018verifiable,liu2018toward,verma2018programmatically,coppens2019distilling} build interpretable policies, either tree-based policies or ``programmatically interpretable.'' Unfortunately, these works are mostly heuristics, i.e., they do not provide guarantees for their quality. Alternatively, \citet{silva2020optimization} focuses on a simple environment (simplified 1-d pole problem) with a straightforward optimal policy. 
In contrast, we provide provable guarantees for a general $d$-dimension maze problem. 


\paragraph{Interpretability in machine learning.} 
There are many works on machine learning interpretability, see \cite{rudin2022interpretable}, and most of them are heuristics. Our focus is on \emph{provably} interpretable  machine learning and mainly the price of interpretability. 
\citet{dasgupta2020explainable,makarychev2021explainable}, studied this price for unsupervised learning, specifically, for $k$-means clustering. The tradeoff is between interpretability and the quality of the model, measured by the $k$-means cost. These works explored the price, by proving lower and upper bounds on the $k$-means cost as a function of the tree's complexity or when it is the smallest possible.  \citet{moshkovitz2021connecting} proved an upper bound on the price of interpretability for supervised learning, under distributional assumptions. 
In this paper, we explore the price of interpretability for reinforcement learning.

\paragraph{Interpretable policies with human-in-the-loop.}  Different works find interpretable policies by utilizing additional external information. For example, \citet{shu2017hierarchical} builds hierarchical policies composed of different subtasks, where a human provides the subtasks. As the human feedback is limited, the optimality of the policies can be guaranteed. 
In this paper, we find an interpretable policy based solely on the Markov decision process.

\paragraph{Post-hoc policies explanations.} The aim of interpretability is to make policies transparent. Another approach to gain (partial) transparency is by adding explanations to preexisting policies.  \citet{amir2018highlights, sequeira2020interestingness,juozapaitis2019explainable} explored different types of explanations, differing in what and how to explain (e.g., explaining rewards, states, or trajectories). The disadvantages of post-hoc explanations for critical systems is elaborated by \citet{rudin2019stop}.

\section{Setup and Problem Formulation} 
\subsection{Maze Definition}
 \begin{wrapfigure}[20]{r}{.4\textwidth}
    \centering
    \includegraphics[width=0.34\textwidth,trim={7cm 5cm 14cm 2cm},clip]{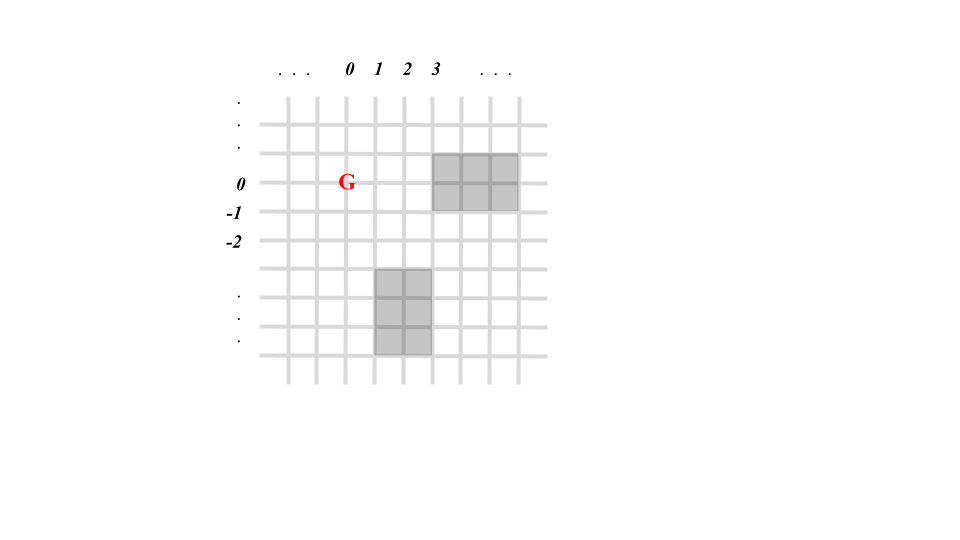}
    \caption{2D-maze. 
    Goal state at $(0,0)$, denoted by  \textcolor{red}{G}. Grey: two obstacles.
    Policy navigates the agent from current square to the goal state.}
    \label{fig:mdp_maze}
\end{wrapfigure}
An \emph{obstacle} is an axis-aligned hyperrectangle $R_{a,b}$, defined by two vectors\footnote{for convenience, and w.l.o.g., the edges of the hyperrectangles are not part of the obstacle.} $a,b\in \R^d$ as $$R_{a,b}=\{x\in \R^d : \forall i,\; a_i <  x_i <  b_i\},$$ where $x_i$ is the $i$-th coordinate of $x$.
The $(k,d)$-\emph{Maze} is a Markov Decision Process (MDP) which is a tuple $\cM=\langle\cS, \cA, \rT, \rC \rangle$ defined by $k$ obstacles,  $\cO^1,\ldots,\cO^k.$ When $k$ and $d$ are clear from the context, we sometimes refer to the $(k,d)$-\emph{Maze} simply as the \emph{maze}. For ease of notation, we denote by $\cO$ the set of all points comprising all of the obstacles, $\cO = \cup_{i=1}^k \cO^i.$
The (infinite) state space is  $\cS=\Z^d$, where one of the states is the (absorbing) \emph{goal state}, $g\in \cS$.
There are $2d$ actions: $\cA=[d]\times\{\pm 1\}$. For $d=2$, the actions correspond to the four actions:  \texttt{up}, \texttt{down}, \texttt{right}, and  \texttt{left}. 

The transition function $\rT:\cS\times\cA\rightarrow\cS$ is deterministic defined by the action chosen. If the agent tries to go through an obstacle, or move from the goal state, it will stay in place.
 More formally, for a state $s$ and an action $(i,b)\in [d]\times \{\pm 1\}$ we first define $s'\in\Z^d$ as the move from $s$, in the $i$-th feature by $b$, i.e., $s'_i=s_i+b$ and $s'_j=s_j$ for all $j\neq i$. Then, the transition function is defined as

\[
   \rT(s,a) = \Bigg\{\begin{array}{lc}
        s\quad\quad & \text{if } s = g \text{ or } s'\in \cO\\
        s'\quad\quad & \text{otherwise }
        \end{array} 
  \]
The cost $\rC:\cS\times\cA\rightarrow\R$ is $1$ for each action, except actions at the goal state with cost $0$ for each action, i.e., $\rC(s,a)=1\cdot \mathbb{I}_{s\neq g}$, where $\mathbb{I}$ is the indicator function.

A policy is a function $\pi:\cS \rightarrow \cA$. 
Any start state $s_0\in\cS$ defines an episode $(s_0, s_1,\ldots, s_T)$ where $s_{t+1}=\rT(s_t,\pi(s_t))$. The episode terminates once $s_T=g$.  We call $s_0$ the \emph{starting state}.
The total cost of a policy for this episode is the sum of the costs throughout, which is the number of actions in the episode, i.e., $T$.    
The agent aims to minimize the total cost of all possible episodes. 
The value function, $V^{\pi}:\cS\rightarrow\R$, of  policy $\pi$ is the cost to reach the goal state from each state, when following policy $\pi.$ The value of an optimal policy is denoted by $V^{*}.$

\subsection{Interpretable Policy} 

An interpretable policy is defined by a decision tree. Each internal node contains a linear threshold function of the current state. Thus, it iteratively partitions the data, leading to leaves. The leaves are labeled by the actions that the policy executes in the current state. The policy's depth defines its complexity. In the paper, we aim to find an optimal interpretable policy with a small complexity. 

\subsubsection{Efficient Implementation} 
In its most straightforward implementation, the interpretable policy traverses the tree upon reaching every new state, which is costly. Our interpretable policy is more efficient. First, as long as the leaf remains the same when moving to another state, there is no need to traverse the tree again, and the same action applies. For that, we use the notation  \[ (i, b, \texttt{while condition}), \] which means that we use the action $(i, b)$ (i.e., change feature $i$ by $b$) repeatedly as long as \texttt{condition} is satisfied. 
Once the leaf of the starting state is located, the next leaf can be found in $O(1)$. In fact, after traversing the tree once to locate the starting state, the policy does not need to traverse the full tree from the top ever again.

\subsection{Problem Formulation and Main Theorem} Given $k$ obstacles and a goal state $g$, we aim to find an optimal policy $\pi^*:\cS\rightarrow\cA$ that is defined by a decision tree with small depth. 
One cannot apply standard methods like policy iteration, since it does not find interpretable policies. Nonetheless, we design an algorithm that finds an interpretable optimal policy.
We are now ready to state the main theorem. 
\begin{theorem}[main theorem, formal]\label{thm:rd_maze_adaptive_interpretable_policy}
For every $(k,d)$-maze, there is an optimal interpretable policy $\pi^*:\cS \rightarrow \cA$ with complexity $O(\log k + 2^d)$, which implies that for a constant $d$ it is $O(\log k)$. If the  starting state has at most $n$ non-zero features, then the complexity of the policy is $O(\log k+2^n)$.
\end{theorem}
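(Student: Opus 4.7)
The approach is to exploit the axis-aligned nature of the obstacles to ``compress'' the infinite state space $\mathbb{Z}^d$ onto a small grid of landmark states, show that an optimal policy can be chosen to respect this grid, and finally encode the resulting policy as a shallow decision tree.

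\textbf{Step 1 (Landmark grid).} For each coordinate $i \in [d]$, let $C_i \subseteq \mathbb{Z}$ be the set of values that appear as an $i$-th coordinate of some obstacle edge (i.e., some $a_i^j$ or $b_i^j$), together with $g_i$. Each of the $k$ obstacles contributes at most two values per coordinate, so $|C_i| \le 2k+1$. Define the landmark set $L = C_1 \times \cdots \times C_d$. These are the intersections of the gray lines in the figure of the introduction.

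\textbf{Step 2 (A landmark-respecting optimal policy).} The heart of the argument is to show that from every starting state $s_0$ there exists an optimal trajectory $s_0, s_1, \dots, s_T = g$ that (i) contains a single initial axis-parallel segment ending at some landmark $\ell \in L$ and (ii) from $\ell$ onward, every ``turn'' of the trajectory occurs at a point of $L$. The proof is an exchange argument: given any shortest path, I would slide the location of each turn along the current axis toward the nearest value in the corresponding $C_i$. Because the cost is an $\ell_1$ step count and the obstacles are axis-aligned, such sliding preserves both the length of the path and its collision-freeness. Iterating this canonicalization across the $O(T)$ turns produces a shortest path whose nontrivial decisions happen only on the landmark grid.

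\textbf{Step 3 (Decision tree construction).} By Step 2, the optimal action depends on the current state only through (i) which gridcell of the partition induced by $L$ the state lies in and (ii) the relative orthant of $s_0$ with respect to $g$ (i.e.\ the signs of $(s_0)_i - g_i$ for each $i$), which dictates which landmark to aim for first. The gridcell can be located by performing a binary search on each coordinate over $|C_i| = O(k)$ thresholds, but because the landmark to aim for is common to large families of cells, only $O(\log k)$ tests of this form are truly needed; the remaining depth is spent enumerating the $2^d$ orthants of $s_0$ relative to $g$. Each leaf is labelled by an instruction of the form $(i, b, \texttt{while condition})$, matching the efficient implementation described earlier: once the leaf of $s_0$ is found, subsequent states either remain in the same leaf or transition in $O(1)$ time to a neighboring leaf. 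In the sparse case, only the $n$ non-zero coordinates of $s_0$ contribute to the orthant case analysis, replacing $2^d$ by $2^n$.

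\textbf{Main obstacle.} The delicate step is Step 2, the canonicalization of shortest paths onto the landmark grid. In two dimensions the exchange is immediate, but in general $d$ one must verify that sliding a turn toward a gridline does not push the trajectory through an obstacle that the original trajectory avoided, and does not lengthen the path. This requires a careful geometric argument using the axis-aligned hyperrectangle structure of the obstacles. A secondary difficulty is bookkeeping in Step 3 to be sure the $2^d$ branching truly captures all cases and is not inflated to $d \cdot 2^d$ or worse.
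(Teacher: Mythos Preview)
Your high-level strategy matches the paper's: compress $\mathbb{Z}^d$ onto the grid $L$ of corners, prove an optimal policy can be chosen to route through corners, and encode the result as a shallow tree. Step~1 is identical to the paper's definition of corners, and your Step~2 is in the same spirit as the paper's key lemma $V^*_{\cM}(s) = \min_{c \in S(s)\cap\cC} d(s,c) + V^*_{\cM'}(c)$, though the paper formalizes it differently: it argues by induction on the number of \emph{feature-events} along a trajectory (times at which some coordinate first lands on a gridline), working backward from the goal and rerouting through a nearby corner. Your ``slide each turn toward the nearest gridline'' heuristic is plausible but, as you note, needs the obstacle-freeness of the slide to be justified; the paper handles this via a surface lemma (if $x \notin \cO$ then the whole axis-aligned box $S(x)$ out to the nearest gridlines is obstacle-free), which is exactly the tool you would need.

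The genuine gap is in Step~3. You claim the optimal action is determined by (i) the gridcell and (ii) the orthant of $s_0$ relative to $g$, and that the $2^d$ branching enumerates these orthants. This is wrong on two counts. First, since $g_i \in C_i$, the gridcell already fixes the sign of $s_i - g_i$ for every $i$, so (ii) adds nothing over (i). Second, and more importantly, knowing the gridcell does \emph{not} pin down which corner to head for: within a single cell the quantity $d(s,c) + V^*_{\cM'}(c)$ is a distinct affine function of $s$ for each of the up to $2^d$ corners $c$ of that cell, and the argmin can change as $s$ moves inside the cell. The paper's $2^d$ term comes from comparing these $2^d$ affine functions against one another (each comparison is a linear threshold, hence a legal internal node), not from any orthant enumeration. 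Correspondingly, the sparse-start improvement to $2^n$ is not because fewer coordinates participate in an orthant test; it is because after adding $0$ to every $C_i$, the zero coordinates of $s_0$ already lie on gridlines, so the surface of $s_0$ has at most $2^n$ corners to compare.
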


\section{Gentle Start: The Case of $d=2$}\label{sec:gentle_start}

To better understand our main result, we start with the special case that the dimension is $d=2$.
In this case the points are in the plane and obstacles are axis-aligned rectangles. The main idea would be to identify a subset of points, which we call corners. The set of corners would be of size at most $O(k^2)$, and the optimal policy would go from the start state to a corner, and continue going between corners until it reaches the goal. The main theoretical challenge would be to prove that indeed there is such an optimal policy, and then to compute it. Finally, we would implement the policy using a decision tree of depth $O(\log k)$. 

\begin{figure}[!h]
\centering
\begin{subfigure}[t]{.45\textwidth}
  \centering
    \includegraphics[scale=0.37,trim={7cm 5cm 13cm 2cm},clip]{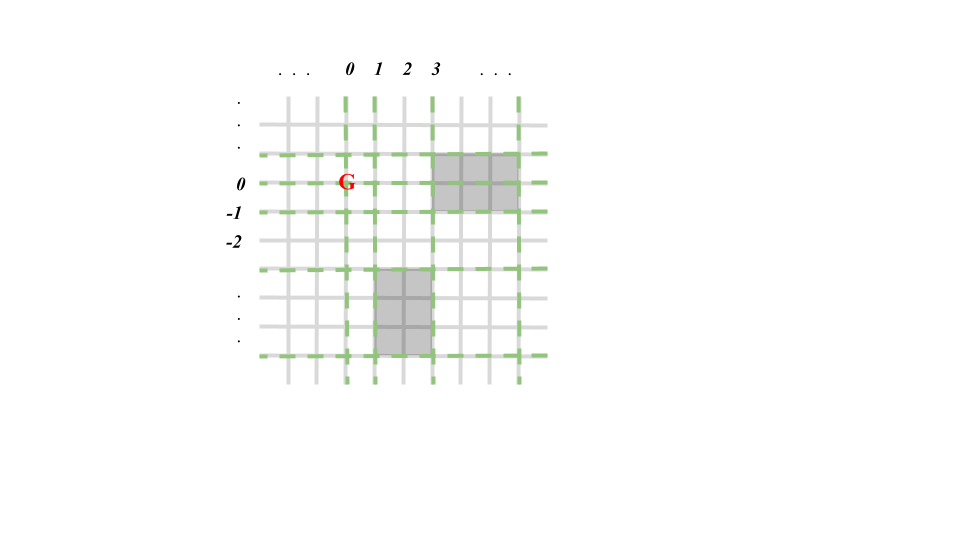}
  \caption{Horizontal and vertical lines}
  \label{fig:2d_the_grid}
\end{subfigure}%
\hfill
\begin{subfigure}[t]{.45\textwidth}
  \centering
  \includegraphics[scale=0.37,trim={7cm 5cm 13cm 2cm},clip]{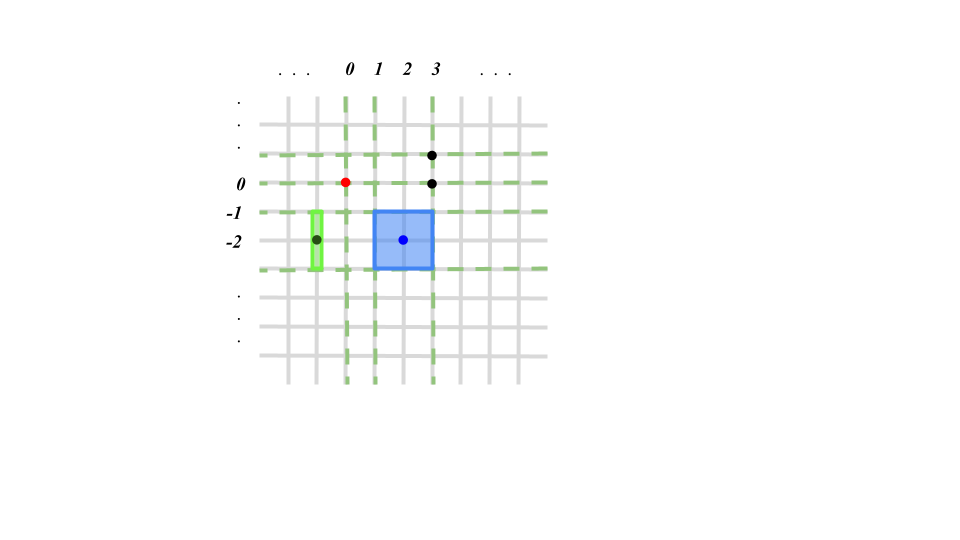}
  \caption{Surfaces and adjacent corners}
  \label{fig:2d_surface}
\end{subfigure}
\caption{Terminology. (a) Dashed lines: horizontal and vertical lines (b) Surface of the red point (corner) is the corner. Surface of the green point (on an edge) is the entire segment between two consecutive lines including the two corners. Surface of the blue point is the entire rectangle including the four edges and corners. Two black points are adjacent corners, both not adjacent to the red corner. 
}
\label{fig:test}
\end{figure}

We start by defining the set of points that we will call corners and related notions required for our presentation. We begin by introducing a few key terms. A horizontal (vertical) line is defined by an integer, e.g., the line $y=1$ is encoded by the integer $1$. 
An obstacle is defined by two horizontal and two vertical lines. For example, in \cref{fig:2d_the_grid} there are two obstacles. The top one is defined by two horizontal lines: $-1,1$, and two vertical lines $3,6$. The second obstacle is defined by the horizontal lines $-3,-6$ and vertical lines $1,3$. We denote an obstacle as two pairs $(h_1,v_1), (h_2,v_2)\in\R^2$ where $h_1<h_2$ are the horizontal lines and $v_1<v_2$ are the vertical lines. (Note that the points $(h_1,v_1)$ and $(h_2,v_2)$ are the two opposite corners of the obstacle rectangle.)

We now define $H$ to be the set of all $h_i$ of the obstacles, and $V$ the set of all the $v_i$ of the obstacles. We also add the goal state $(g_1,g_2)$, namely add $g_1\in H$ and $g_2\in V$.
The set of corners\footnote{For ease of presentation we assume that $-\infty,\infty\in H,V$, where $\infty$ (respectively $-\infty$) is larger (smaller) than any number;  and the segment $[a,\infty]$ ($[-\infty,a]$) is  the half-open interval $[a,\infty)$ (respectively $(-\infty,a]$).}  is $\cC= V\times H$ and the number of corners is $O(k^2).$
%
For example, the horizontal line $-6$ and the vertical line $3$ define the corner $(3,-6).$

Using sets $H$ and $V$ we can define the \emph{surface} of a point, which will be an axis-aligned rectangle which is obstacles free.
Fix a point $(x,y)\in\R^2$, its surface is defined by the following two segments, $X$ and $Y$. If $x\in V$, then $X = \{x\}$. Otherwise, $v^j < x < v^{j+1}$ and $X$ is the segment between $v^j$ and $v^{j+1}$, i.e.,  $X=[v^j, v^{j+1}]$. Similarly one can define $Y$. The surface of $(x,y)$ is  \[S((x,y)) = X \times Y.\] 
See example in \cref{fig:2d_surface}. From the definition of the surface we have that the surface of a corner $c$ is the corner itself, $S(c) = \{c\}$. 

The next claim proves that the agent can freely go from a state $s\in\cS$ to any point in its surface, $S(s)$, without encountering obstacles. Namely, for any $s'\in S(s)$ the agent can modify first the $x$ coordinate from $s_x$ to $s'_x$ and then from $s_y$ to $s'_y$, without hitting any obstacle in $\cO$. This will give us the freedom to leave the surface in many different states. We utilize it to implement the interpretable policy. The general proof for any dimension appears in \cref{sec:algorithm_building_policy}. 
\begin{claim}\label{clm:surface_free_d_2}
For any $s\in\R^2$, if $s\notin \cO$ then $S(s)\cap \cO=\emptyset$. 
\end{claim}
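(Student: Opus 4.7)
The plan is to argue by contradiction. Suppose there exists $s' = (x', y') \in S(s) \cap \cO$, so $s'$ lies in some obstacle $R_{a,b} = (a_1, b_1) \times (a_2, b_2)$, i.e., $a_1 < x' < b_1$ and $a_2 < y' < b_2$. I would then derive a contradiction by showing that $s = (x,y)$ itself lies in the same open rectangle $R_{a,b}$, which would violate the hypothesis $s \notin \cO$.

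The crucial observation is a \emph{coordinate compatibility}: the corner coordinates $a_1, b_1$ of the obstacle lie in $V$ and $a_2, b_2$ lie in $H$, while the surface $S(s) = X \times Y$ is delimited by \emph{consecutive} elements of $V$ and of $H$ (or by the sentinels $\pm\infty$). First I would argue $a_1 < x < b_1$. If $x \in V$ then $X = \{x\}$, so $x' = x$ and the chain $a_1 < x < b_1$ is immediate. Otherwise $v^j < x < v^{j+1}$ for consecutive $v^j, v^{j+1} \in V$, giving $v^j \le x' \le v^{j+1}$. The inequality $a_1 < x' \le v^{j+1}$ with $a_1 \in V$ together with the fact that no element of $V$ lies strictly between $v^j$ and $v^{j+1}$ forces $a_1 \le v^j$; symmetrically $b_1 \ge v^{j+1}$. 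Hence
\[
a_1 \le v^j < x < v^{j+1} \le b_1.
\]

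The identical argument applied to the $y$-coordinate, using $H$ in place of $V$ and $(h^l, h^{l+1})$ in place of $(v^j, v^{j+1})$, yields $a_2 < y < b_2$. Combining the two chains gives $s \in R_{a,b} \subseteq \cO$, the desired contradiction.

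I don't expect a serious obstacle; the only care required is in the edge cases when $x \in V$ or $y \in H$ (where the surface degenerates in that coordinate) and when the surface extends to $\pm\infty$ (handled cleanly by the paper's convention $\pm\infty \in V, H$, since any $a_1 \in V$ automatically satisfies $a_1 \ge -\infty$). The generalization to arbitrary $d$ that is promised for \cref{sec:algorithm_building_policy} should follow by running the same ``consecutive elements'' argument in each of the $d$ coordinates independently.
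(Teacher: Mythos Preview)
Your proposal is correct and matches the paper's own argument essentially step for step: the paper (in the general $d$-dimensional version, \cref{lem:surface_not_obstacle}) also assumes by contradiction that some $y\in S(x)$ lies in an obstacle $R_{a,b}$, then shows coordinate-by-coordinate---splitting into the same two cases, $x_i\in L^i$ versus $L^i[j]<x_i<L^i[j+1]$---that $a_i<x_i<b_i$ for every $i$, yielding $x\in R_{a,b}$. Your justification of $a_1\le v^j$ and $b_1\ge v^{j+1}$ via the ``no element of $V$ lies strictly between consecutive elements'' observation is exactly what the paper abbreviates as ``from the definition of the grid.''
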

Another key term is \emph{adjacent corners}. Two corners are \emph{adjacent} if equal on one feature and the other feature has two consecutive lines in $H$ or $V$, while not having an obstacle between them, see \cref{fig:2d_surface}.

%

\subsection{The Existence of a Structured Optimal Policy}\label{sec:2_d_structured_policy}

    \begin{wrapfigure}[21]{r}{.55\textwidth}
    \centering
     \includegraphics[width=0.54\textwidth,trim={0cm 0cm 10cm 0cm},clip]{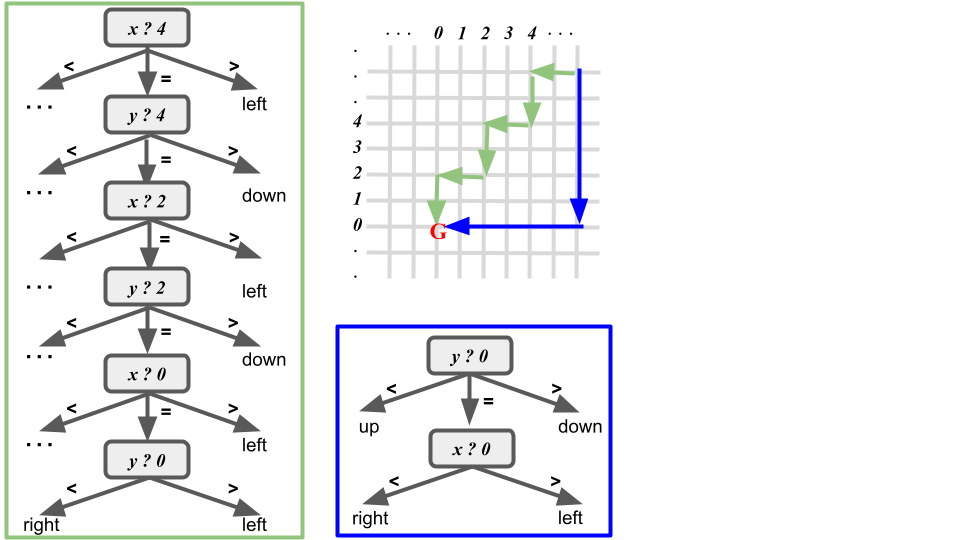}
     \caption{Two optimal policies with different decision tree depth. Due to space limitations, some parts of the green policy are omitted and replaced by ``...'' }
 \label{fig:rashomon_of_optimal_policy}
    \end{wrapfigure}
  
The set of optimal policies is not necessarily a singleton, as exemplified in \cref{fig:rashomon_of_optimal_policy}. 
When there are multiple optimal policies, there is flexibility in choosing among them. The next section uses this flexibility to construct one policy which has an interpretable (i.e., sparse and easy-to-understand) structure. 


We focus on a policy $\tilde{\pi}$, see \cref{alg:pitilde} where the agent moves from one corner to an adjacent one, until reaching the goal state. 
 The agent iteratively chooses its next move (i.e., which corner to go to next) using a new MDP $\cM'$ defined on the $O(k^2)$ corners. There are four actions, one for each adjacent corner. The optimal policy, $\pi^*_{\cM'},$ for the \emph{finite} maze problem, $\cM'$, can be found efficiently, e.g., using Dijkstra's algorithm \citep{cormen2022introduction} that finds shortest paths to the goal state.
 
   \begin{wrapfigure}[11]{r}{.5\textwidth}
   \begin{minipage}[tr]{.5\textwidth}
     \centering
 \begin{algorithm}[H]
\caption{Policy $\tilde{\pi}$}\label{alg:pitilde}
\begin{algorithmic}[1] 
\IF {$s$ is a corner} 
\STATE \textbf{return} $\pi^*_{\cM'}(s)$ 
\ELSE
\STATE  $F \leftarrow \text{ corners in } S(s)$ \STATE $c = \textrm{arg}\min_{c \in F}  V^*_{\cM'}(c) + \sum_{i=1}^2|s_i-c_i|$
\STATE \textbf{return} go to corner $c$
\ENDIF
\end{algorithmic}
\end{algorithm}
  \end{minipage}
  \end{wrapfigure}
 
 The formal proof that policy $\tilde{\pi}$ is an optimal one appears in \cref{sec:algorithm_building_policy}. 
 

\subsection{Building the Decision Tree}
Building the decision tree is composed of two steps. Recall that the input to the decision tree is $(x,y)$. First, the decision tree detects the surface the starting state $(x,y)$ is in. Second, given the detected surface, the decision tree chooses which action to take.  
These two steps are illustrated in \cref{fig:building_policy} for the starting state $(2,-2)$. The left decision tree finds the surface. The right decision tree continues and computes the optimal distance to the goal and the corresponding corner the agent should visit next (we elaborate more next).


\paragraph{Step 1: finding the surface.}
In the first step, we want to detect the location of the current state $(x, y)$ in the list of vertical lines, $V$. Namely, we want to find index $i$ such that  $v^i \leq x < v^{i+1}$. The binary search technique enables us to find this $j$ using a decision tree of depth $O(\log k).$ In \cref{fig:building_policy} (left), these are the first two levels of the decision tree. Similarly, we use the binary search technique over the horizontal lines and find an index $j$ such that $h^j\leq y < h^{j+1}$. 
At the end of this step each leaf corresponds to indexes $i, j$ and 
all points in the same leaf have exactly the same surface. 

\begin{figure}[!h]
\centering
     \includegraphics[width=0.9\linewidth]{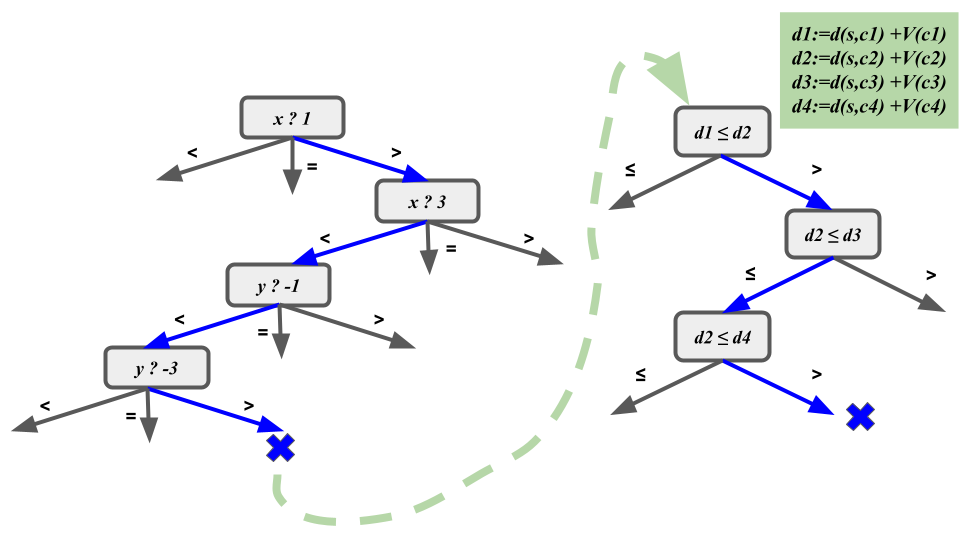}
     \caption{Building interpretable  policy. Illustrated for maze in \cref{fig:2d_surface}.  (left) The output of step 1 (\cref{alg:build_tree}). 
     Blue: the trajectory for state $(2,-2)$. (right) The output of step 2 (\cref{alg:build_direction_tree}), which finds best corner among $c1,c2,c3,c4$.}
     
 \label{fig:building_policy}
     \end{figure}
     
\paragraph{Step 2: taking an action.}
Our agent should go to one of the (at most) four corners in the starting state's surface. 
For each of the corners we compute the distance from the starting state to that corner plus the distance from the corner to the goal. (In \cref{fig:building_policy} we use $d(s,c_i)$ to be the distance to corner $c_i$ and $V(c_i)$ to be the distance from $c_i$ to the goal.)
We then select as the next corner as the corner that minimizes total distance.
Since \cref{alg:pitilde} is an optimal policy,
we are guaranteed that there is an optimal policy that goes to one of the corners. Showing that indeed \cref{alg:pitilde} is optimal is deferred to the general case and appears in \cref{thm:rd_value_function_of_large_and_small_MDP}


\paragraph{Putting it all together.} 
The decision tree we build implements the optimal policy $\tilde{\pi}$, see \cref{alg:pitilde}, and thus it is an optimal policy, see \cref{sec:2_d_structured_policy}. 
The policy complexity is controlled by the first step, which is $O(\log k).$
For general dimension, see next section.  
\section{Algorithm for Building an Optimal Interpretable Policy}\label{sec:algorithm_building_policy} 
In the rest of the section we describe the general algorithm that builds the interpretable policy in $d$ dimensions. Following \cref{sec:gentle_start}, the high-level approach is to construct a smaller MDP, $\cM'$, and use an optimal policy on $\cM'$ to construct the interpretable policy. In \cref{subsec:smaller_MDP} we construct $\cM'$ using the terminology introduced in \cref{subsec:direct_access} and \cref{subsec:the_grid}. Then, in \cref{subsec:the_algorithm} we describe the desired algorithm. Importantly, this will prove that there \emph{exists} an interpretable policy. Thus, there is no accuracy/interpretability tradeoff in
low dimensional mazes

\subsection{Direct Access}\label{subsec:direct_access}
We say that there is \emph{direct access} between a state $s$ and a state $s'$ if \texttt{DirectGoTo} algorithm can be applied to go from $s$ to $s'$. Note that if the \texttt{DirectGoTo} can be applied, this is a shortest path with smallest cost between them, $d(s,s')$, where $d$ is the Manhattan distance function defined as
$d(x,y)=\sum_{i=1}^d |x_i-y_i|.$
\begin{algorithm}
\caption{DirectGoTo}
\begin{algorithmic}[1] 
\STATE \textbf{Input:} $s,s'\in \cS$
\STATE \textbf{Output:} series of actions
\FOR {$i\in[d]$}
\IF {$s_i < s'_i$}
\STATE Take action $(i, +1,\texttt{ while } s_i < s'_i)$
\ELSIF {$s_i > s'_i$}
\STATE Take action $(i, -1, \texttt{ while } s_i >  s'_i)$
\ENDIF
\ENDFOR
\end{algorithmic}
\end{algorithm}

\subsection{Corners, Surfaces and Adjacent Corners}\label{subsec:the_grid}
In this section we introduce key terms like a \emph{corner} and a \emph{surface} that will be essential in the main proof. 
Recall that a maze is defined using a goal state $g\in\cS$ and $k$ obstacles $R_{a^1,b^1},\ldots, R_{a^k,b^k}$.  
We first define for each feature $i\in[d]$ a \emph{list of scalars} $L^i\subseteq \Z$, which come from the $i$th coordinate of each obstacle and the goal state:\footnote{For ease of presentation we assume that $-\infty,\infty\in L^i$ for each list; $\infty$ ($-\infty$) is larger (smaller) than any number;  and the segment $[a,\infty]$ ($[-\infty,a]$) is  the half-open interval $[a,\infty)$ ($(-\infty,a]$) and  $[-\infty,\infty]$ is the open interval $(-\infty,\infty)$.}
\[L^i =  \cup_{j=1}^k \{a^j_i,b^j_i\} \cup \{g_i\} \]
For example, for the maze in \cref{fig:mdp_maze}, $L^1=\{0,1,3,6\}$ and  $L^2=\{-1,0,1,3,6\}.$
 Denote $n_i=|L^i|.$ Assume without loss of generality that the elements in $L^i$ are sorted $L^i[1]\leq L^i[2]\leq  \ldots \leq L^i[n_i].$ 
%
A \emph{corner} is a vector $c\in\Z^d$ such that all of its coordinates are in the list, i.e., $\forall i\in [d]$, $c_i \in L^i.$ 
We denote the set of all corners by \[\cC=L^1\times\ldots\times L^d.\]
The total number of corners is $|\cC|=\prod_{i=1}^d n_i.$  
Two corners $c^1,c^2$ are \emph{adjacent} if they are the same in all features, except exactly one, $i$, where $c^1_i$ and $c^2_i$ are consecutive in $L^i$, and there is no obstacle between them. It implies that  there is  direct access between them. Clearly, each corner has at most $2d$ adjacent corners. 

Given a point $x\in\R^d$, we define its \emph{surface}, $S(x)$, as the set of points that can be reached from $x$ until (and including) encountering a value in one of the lists $L^i$. More formally, define for each feature $i$ the segment $S^i$ in the following way. If $L^i[j]< x_i < L^i[j+1]$ then $S^i = [L^i[j], L^i[j+1]]$ and if $x_i=L^i[j]$ then $S^i = [L^i[j]]$. The surface of $x$ is defined as 
\[ S(x) = S^1\times\ldots\times S^d.\]
For example, if $x$ is a corner then $S(x)=\{x\}.$
Any surface contains at most $2^d$ corners. 
A key advantage of this definition for surfaces is that if the agent reaches a point $x$, it can directly reach all of its surface, $S(x)$ without hitting an obstacle. In other words, if a point is not in $\cO$, then none of its surface is in $\cO$. 
\begin{lemma}\label{lem:surface_not_obstacle}
For any $x\in\R^d$, if $x\notin \cO$ then $S(x)\cap \cO=\emptyset$. 
\end{lemma}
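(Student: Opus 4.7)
The plan is to argue by contradiction, exploiting the fact that the endpoints of every obstacle along coordinate $i$ are included in $L^i$ by construction, so the open interval $(a^j_i, b^j_i)$ of any obstacle cannot ``split'' two consecutive entries of $L^i$.

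Concretely, suppose toward contradiction that there exists $y \in S(x)$ with $y \in \cO$. Then $y \in R_{a^j,b^j}$ for some $j \in [k]$, which by the definition of an obstacle means $a^j_i < y_i < b^j_i$ for every $i \in [d]$. My goal is to deduce from this that $x$ itself lies in $R_{a^j,b^j}$, contradicting $x \notin \cO$.

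I would then fix an arbitrary coordinate $i$ and show $a^j_i < x_i < b^j_i$ by a case split on the form of $S^i$. Both $a^j_i$ and $b^j_i$ are elements of $L^i$ by the definition $L^i = \bigcup_{j=1}^k \{a^j_i, b^j_i\} \cup \{g_i\}$. In the first case, $S^i = \{L^i[m]\}$ is a singleton, so $y_i = L^i[m] = x_i$, and $a^j_i < x_i < b^j_i$ is immediate. In the second case, $S^i = [L^i[m], L^i[m+1]]$ with $L^i[m], L^i[m+1]$ consecutive in $L^i$ and $L^i[m] < x_i < L^i[m+1]$. Since $a^j_i \in L^i$ and $a^j_i < y_i \le L^i[m+1]$, we cannot have $a^j_i$ strictly between the consecutive values $L^i[m]$ and $L^i[m+1]$, nor can $a^j_i \ge L^i[m+1]$; hence $a^j_i \le L^i[m]$. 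Symmetrically $b^j_i \ge L^i[m+1]$. Therefore
\[
a^j_i \;\le\; L^i[m] \;<\; x_i \;<\; L^i[m+1] \;\le\; b^j_i.
\]
Doing this for each coordinate gives $x \in R_{a^j,b^j} \subseteq \cO$, contradicting the assumption that $x \notin \cO$.

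The only delicate point is the second case, and specifically the use of the \emph{consecutiveness} of $L^i[m]$ and $L^i[m+1]$: without it, an obstacle endpoint could potentially sit strictly inside the interval $(L^i[m], L^i[m+1])$. But since every obstacle coordinate is included in $L^i$, no new scalar can appear strictly between two consecutive entries, which is exactly what forces $(a^j_i, b^j_i)$ to engulf the whole segment $[L^i[m], L^i[m+1]]$ and hence $x_i$. This is the crux; the rest is routine bookkeeping across the $d$ coordinates.
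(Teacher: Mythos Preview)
Your proof is correct and essentially identical to the paper's: both argue by contradiction, fix a coordinate, split into the singleton and interval cases for $S^i$, and use the fact that $a^j_i, b^j_i \in L^i$ together with consecutiveness to conclude $a^j_i \le L^i[m] < x_i < L^i[m+1] \le b^j_i$. Your write-up is in fact slightly more explicit about why consecutiveness forces $a^j_i \le L^i[m]$ and $b^j_i \ge L^i[m+1]$, which the paper glosses as ``from the definition of the grid.''
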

\begin{proof}
Fix $x\notin \cO$ and $y\in S(x).$ We will show that $y\notin \cO$. Assume by contradiction that  $y\in \cO.$ Then, there is an obstacle $R_{a,b}$ such that for all $i\in[d]$, it holds that $a_i < y_i < b_i.$  Fix $i\in[d]$. If $x_i=L^i[j]$ then $y_i=x_i$ and $a_i < x_i < b_i$ If $L^i[j] < x_i < L^i[j+1]$, then, $L^i[j] \leq y_i \leq L^i[j+1]$. From the definition of the grid, $b_i\geq L^i[j+1]$ and $a_i\leq L^i[j]$ which implies that $a_i < x_i < b_i$. Thus, for all $i\in[d]$, $a_i < x_i < b_i$. I.e., $x\in R_{a,b}$, which is a contradiction.
\end{proof}

A corollary of \cref{lem:surface_not_obstacle} is that \texttt{DirectGoTo} can be applied to any state $s$ and $s'$, in its surface, $s'\in S(s)$, by advancing one feature at a time, while not running into any obstacle.

\begin{claim}\label{clm:direct_access}
Algorithm \texttt{DirectGoTo} implements the shortest path from $s$ to $s'$ in the following cases 
\begin{itemize}
    \item If $s\in \cS$ and $s'\in S(s)$
    \item If $s\in\cC$ and $s'\in\cC$ is an adjacent corner to $s$ 
\end{itemize}
\end{claim}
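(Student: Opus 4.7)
The plan is to establish two things: (a) a matching lower bound on path length from $s$ to $s'$, and (b) that \texttt{DirectGoTo} is a valid (obstacle-free) trajectory whose length hits that bound. The lower bound is straightforward: every action in $\cA$ changes exactly one coordinate by $\pm 1$, so any path from $s$ to $s'$ must take at least $|s_i - s'_i|$ steps that affect coordinate $i$, for a total of at least $d(s,s') = \sum_i |s_i - s'_i|$ actions. On the other hand, if no action of \texttt{DirectGoTo} is blocked by an obstacle, then the loop executes exactly $|s_i - s'_i|$ steps in coordinate $i$, giving total cost $d(s,s')$ and matching the lower bound. So the whole task reduces to verifying that \texttt{DirectGoTo} is never blocked in the two cases.

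For case 1 ($s \in \cS$, $s' \in S(s)$), I would prove by induction on the loop index $j$ that after processing coordinates $1,\ldots,j$, the current state still lies in $S(s)$, and moreover every intermediate state produced while processing coordinate $j$ lies in $S(s)$. The base case is $s \in S(s)$. For the inductive step, observe that by definition of the surface, both $s_j$ and $s'_j$ belong to the same interval $S^j$ (either a singleton $\{L^j[\ell]\}$ or $[L^j[\ell], L^j[\ell+1]]$). Since $S^j$ is an interval in $\R$, every integer strictly between $s_j$ and $s'_j$ also lies in $S^j$; hence every intermediate state remains in the Cartesian product $S^1 \times \cdots \times S^d = S(s)$. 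By \cref{lem:surface_not_obstacle}, $S(s) \cap \cO = \emptyset$, so no transition is blocked.

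For case 2 ($s,s' \in \cC$ adjacent), by definition of adjacent corners $s$ and $s'$ agree on every feature except one coordinate $i$, where $s_i$ and $s'_i$ are consecutive in $L^i$ and no obstacle lies between them. The \texttt{DirectGoTo} loop is a no-op for every $j \neq i$ and walks in coordinate $i$ from $s_i$ to $s'_i$. All intermediate states coincide with $s$ on every coordinate $j \neq i$, and on coordinate $i$ take integer values strictly between the two consecutive list entries; the hypothesis that no obstacle lies between them rules out any blocking transition.

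The main obstacle I anticipate is keeping case 1 clean: one must be careful that the invariant ``current state lies in $S(s)$'' is preserved not only at the end of each outer iteration but also throughout the inner \texttt{while} loop. The interval structure of $S^i$ is exactly what makes this work, so the proof essentially reduces to invoking \cref{lem:surface_not_obstacle} once the invariant is in place. Combining (a) and (b) yields that \texttt{DirectGoTo} realizes a shortest path in both cases.
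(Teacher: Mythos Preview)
Your proposal is correct and follows essentially the same approach as the paper: the paper's proof observes that \texttt{DirectGoTo} is a shortest path provided it avoids obstacles, then handles the adjacent-corner case by definition and the surface case by noting that every intermediate state lies in $S(s)$ and invoking \cref{lem:surface_not_obstacle}. Your version is simply more explicit, spelling out the Manhattan-distance lower bound and the inductive invariant that the paper asserts in a single sentence.
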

\begin{proof}
The algorithm is the shortest path, assuming it does not run into obstacles. If $s\in\cC$ and $s'\in\cC$ is an adjacent corner to $s$, \texttt{DirectGoTo} will not run into obstacles, by the definition of adjacent corners. If $s\in \cS$ and $s'\in S(s)$, then all states \texttt{DirectGoTo} traverses are in $S(s)$ and the claim follows from \cref{lem:surface_not_obstacle}.  
\end{proof}

\subsection{The Existence of a Structured Optimal Policy}\label{subsec:smaller_MDP}
This section shows the existence of one optimal policy that has a special structure. Intuitively, we show that there is an optimal policy that traverses between areas by moving from one corner to another. In designing the interpretable model, this will allow us to focus only on corners and ignore all other (infinite number of) states. 

Formally, we define a new MDP, $\cM'$. The states are all the corners, $\cC$ in the grid. The actions are ``go to adjacent corner.'' There are at most $2d$ actions from each state. The cost of action that moves from corner $c^1$ to an adjacent corner $c^2$ that differs on the $i$-th feature is $d(c^1,c^2)=|c_i^1-c_i^2|,$ which is exactly the cost to move from state $c^1$ to state $c^2.$

Denote by $V^*_{\cM'}:\cS\rightarrow\R$ the optimal value and by $\pi^*_{\cM'}$ the optimal policy for MDP $\cM'$. We will focus on the policy $\tilde{\pi}$ defined as \cref{alg:policy_pi_tilde}. The main theorem of this section is that $\tilde{\pi}$ is an optimal policy for $\cM$. To prove it, we first show that  the optimal value with respect to the full maze MDP, $\cM$, can be calculated using the optimal value for the smaller MDP, $\cM'$.   
\begin{algorithm}
\caption{Policy $\tilde{\pi}$}
\label{alg:policy_pi_tilde}
\begin{algorithmic}[1] 
\STATE \textbf{Input:} $s\in \cS$
\STATE \textbf{Output:}  actions
\IF {$s\in\cC$}
\STATE \textbf{return} $\pi^*_{\cM'}(s)$ 
\ELSE
\STATE  $c = \textrm{arg}\min_{c \in S(s)\cap\cC} d(s,c) + V^*_{\cM'}(c)$
\STATE \textbf{return}  $\texttt{DirectGoTo}(s,c)$
\ENDIF
\end{algorithmic}
\end{algorithm}

\begin{restatable}{lemma}{ValueLargeSmallMDP} \label{thm:rd_value_function_of_large_and_small_MDP}
For any state $s\in\cS$, it holds that 
$$V^*_{\cM}(s) = \min_{c \in S(s)\cap\cC} d(s,c) + V^*_{\cM'}(c).$$
\end{restatable}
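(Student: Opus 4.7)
The plan is to prove the two inequalities separately. For the upper bound $V^*_{\cM}(s) \leq \min_{c \in S(s)\cap\cC} d(s,c) + V^*_{\cM'}(c)$, I will, for any fixed $c \in S(s)\cap\cC$, exhibit a policy in $\cM$ of cost $d(s,c) + V^*_{\cM'}(c)$: first use \texttt{DirectGoTo} to reach $c$ from $s$ in exactly $d(s,c)$ steps (legal by \cref{clm:direct_access} since $c \in S(s)$), and then simulate an optimal policy of $\cM'$, replacing each move between adjacent corners by the corresponding \texttt{DirectGoTo} call (again legal by \cref{clm:direct_access}, and with matching cost $|c^1_i - c^2_i|$). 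Taking the minimum over $c$ gives the bound.

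For the reverse inequality, I will use strong induction on $V^*_{\cM}(s)$. The base case $s=g$ is immediate since $g \in \cC$, $S(g) = \{g\}$, and $V^*_{\cM'}(g) = 0$. For the inductive step, let $s_1 = \rT(s, \pi^*(s))$ be the next state along an optimal trajectory in $\cM$, so $V^*_{\cM}(s_1) = V^*_{\cM}(s) - 1$. By the induction hypothesis there is a corner $c' \in S(s_1)\cap\cC$ with $d(s_1,c') + V^*_{\cM'}(c') = V^*_{\cM}(s_1)$. If $c' \in S(s)$, the triangle inequality $d(s,c') \leq 1 + d(s_1,c')$ lets me take $c=c'$ and finish.

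The main obstacle is the remaining case, where $c' \notin S(s)$. I expect this to occur only when $s$ lies on a grid hyperplane in the coordinate $i$ corresponding to the action $\pi^*(s) = (i,b)$, i.e., $s_i \in L^i$, and the step from $s$ to $s_1$ crosses that hyperplane so that $c'_i \neq s_i$. In this case I will construct $c \in S(s)\cap\cC$ by setting $c_i = s_i$ and $c_k = c'_k$ for $k \neq i$. Three items then need verification: (a) $c \in S(s)\cap\cC$, which follows from $S^k(s) = S^k(s_1)$ for $k \neq i$ together with $s_i \in L^i$; (b) $c$ and $c'$ are adjacent corners in $\cM'$, so $V^*_{\cM'}(c) \leq V^*_{\cM'}(c') + |c_i - c'_i|$; the adjacency requires the segment from $c$ to $c'$ (which varies only in coordinate $i$, whose range lies in $S^i(s_1)$, with other coordinates in $S^k(s_1)$) to be obstacle free, which I will get by applying \cref{lem:surface_not_obstacle} to $s_1$; and (c) the identity $d(s,c) + |c_i - c'_i| = d(s_1,c') + 1$, which reduces to $|c_i - c'_i| = |s_{1,i} - c'_i| + 1$ and can be checked by a direct computation on the two sub-cases $L^i[j]+1 = L^i[j+1]$ and $L^i[j]+1 < L^i[j+1]$. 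Assembling (a)--(c) yields
\[
d(s,c) + V^*_{\cM'}(c) \;\leq\; 1 + d(s_1,c') + V^*_{\cM'}(c') \;=\; V^*_{\cM}(s),
\]
closing the induction.
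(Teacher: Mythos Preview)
Your proof is correct and takes a genuinely different route from the paper's. Both arguments handle the two inequalities separately, and your upper bound is essentially the paper's. For the lower bound, however, the paper introduces the notion of a \emph{feature-event} (a time $t$ along the optimal $\cM$-trajectory at which some coordinate enters a list $L^i$), generalizes the statement to allow an arbitrary corner as goal, and inducts \emph{backwards} from the goal on the number of such events; it also relies on a separately proved auxiliary claim that $V^*_{\cM}(c) = V^*_{\cM'}(c)$ at every corner. Your argument instead inducts \emph{forward} on the integer $V^*_{\cM}(s)$, peeling off a single optimal step $s\to s_1$ and analyzing how the surface changes across that step. This is more elementary: it avoids the generalization to arbitrary goal corners, dispenses with the auxiliary corner claim, and the case split (whether $c' \in S(s)$) is very concrete. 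The paper's feature-event decomposition, by contrast, packages the grid-hyperplane crossings into a structural invariant, which aligns with its broader ``compressing'' narrative.

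One small technical point in your item (b): the assertion that the segment from $c$ to $c'$ lies entirely in $S(s_1)$ fails in the sub-case $L^i[j]+1 = L^i[j+1]$, since then $S^i(s_1) = \{L^i[j+1]\}$ and $c_i = L^i[j] \notin S^i(s_1)$. This is easily patched: in that sub-case the only lattice points on the segment are $c$ and $c'$ themselves, which lie in $S(s)$ and $S(s_1)$ respectively and are therefore obstacle-free by \cref{lem:surface_not_obstacle}, so adjacency still holds.
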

This proves that indeed $\tilde{\pi}$ is an optimal policy. %
The proof of this lemma is in \cref{sec:analysis}.

\subsection{The Algorithm}\label{subsec:the_algorithm}
In this section, we show how to implement the optimal policy $\tilde{\pi}$ with an interpretable policy, i.e., a low depth decision tree. This interpretable policy is built in our main algorithm, Algorithm~\ref{alg:main_alg}. It is composed conceptually of the following steps.
(See Appendix~\ref{apx:algorithms}.)

We first compute the set of corners $\cC$ from the set of obstacles.
Then we build the MDP, $\cM'$, defined by the corners $\cC$, and compute an optimal policy and value function for $\cM'$. 
Finding the optimal policy $\pi^*_{\cM'}$ for the finite, smaller MDP can be done by any of the standard planning algorithms \citep{sutton2018reinforcement} in time $\tilde{O}(d|\cC|)$.\footnote{Note that standard algorithms cannot be implemented directly to find the optimal policy of the original MDP, $\cM$ as $\cM$ has infinite number of states. This adds another layer of complexity for finding the optimal interpretable policy for $\cM.$} Given the optimal policy $\pi^*_{\cM'}$ for $\cM'$, the algorithm then calls the \texttt{build\_grid\_tree} subroutine, which builds the decision tree.

The \texttt{build\_grid\_tree} subroutine partitions the state space such that at each part, all states have the same surface. 
The subroutine, for each feature $i$, builds a search tree over $L^i$ to find an index $j$ such that $L^i[j]\leq x_i<L^i[j+1]$. This decision tree has depth $O(\log|\cC|)$.
Special care is taken to ensure that in each leaf we have a single surface (details are in Appendix \ref{apx:algorithms}).
%
%
We then compute all the corners in each leaf (which is a single surface).
If we are already in a corner, we continue to the next corner that $\pi^*_{\cM'}$ defines, which we encode in the leaf.
Otherwise, we continue to the \texttt{build\_direction\_tree} subroutine.

The subroutine \texttt{build\_direction\_tree} receives as input the corners of the current leaf.
It chooses the best corner among them, where best  minimizes the sum of the minimal distance to the corner plus the minimal distance from the corner to the goal. (It is critical for our decision tree construction that the next state is a corner, which is guaranteed by  \cref{thm:rd_value_function_of_large_and_small_MDP}). 
Finally, we encode in the current leaf the identity of the best corner.


\paragraph{Analysis.}
%
One of the main challenges in the proof is proving \cref{thm:rd_value_function_of_large_and_small_MDP}.
The lemma shows that an optimal policy can move first to a corner and then continue as $\pi^*_{\cM'}$ indicates, i.e., moving only between corners. Put differently, the agent leaves each surface from a corner.
Next, we outline the high-level idea for proving this lemma. 

We show how to take an optimal policy $\pi^*_{\cM}$ to $\cM$ and modify it to ensure the agent leaves any surface from a corner. The modification is done in a \emph{backward manner} from the goal state to the starting state. Denote the current state by $s$. If the goal state is in the surface of $s$, i.e., $g\in S(s)$, the claim is immediate, as there is direct access between $s$ and $g$. Otherwise, focus on the backward path the agent takes from $s$ to $g$. Before reaching $g$, we pause this path at the last time the agent reaches a state $s'$ such that in some feature $i$, it holds that $s'_i \in L^i$. This $s'$ must exist because $g$ is not in the surface of $s$. 
We need to show that we can replace the path $s\rightarrow s'\rightarrow g$ by a path  $s\rightarrow s'\rightarrow c\rightarrow g$, where $c$ is a corner, and the cost of the two paths is identical. 

We consider a corner $c$, which is equal to $g$ at every feature except $i$, i.e., $c_j=g_j$ for all $j\in[d]-\{i\}$. In the $i$-th feature $c$ equals to $s'$, i.e.,  to $c_i=s'_i$. Notably, the corners $c$ and $g$ are both in the surface of $s'$. This means that the agent can freely go from $s'$ to $c$ and then to $g$. Note that this path has the same cost as the path induced by $\pi^*_{\cM}$ from $s'$ to $g$. The main benefit is that now the modified policy leaves from corner $c$. The proof can continue by induction where the new goal state is the corner $c$. 
(The full analysis and proof are in Appendix~\ref{sec:analysis}.)

\section{Conclusion and Open Problems}
In this paper, we found an optimal policy with complexity $O(2^d+\log k)$ for $(k,d)$-mazes, implying that for mazes, there is no price for interpretability. This result holds even in the face of uniform noise: at each state, the agent stays at the same place with some probability. 

Several problems are left open. Under what conditions can one reduce the complexity of the policy (or show a matching lower bound). In the paper we explored one type of noise. Nevertheless, building interpretable policies for other types of noise is of interest.  Lastly, can this result be generalized to different settings beyond mazes, allowing for optimal and interpretable policies.


\section*{Acknowledgements}
Yishay Mansour and Michal Moshkovitz has received funding from the European Research Council (ERC) under the European Union’s Horizon 2020 research and innovation program (grant agreement No. 882396), by the Israel Science Foundation (grant number 993/17), Tel Aviv University Center for AI and Data Science (TAD), and the Yandex Initiative for Machine Learning at Tel Aviv University. Cynthia Rudin acknowledges funding under NSF IIS: 2147061, NSF CCF: 1934964, and NIDA R01 DA054994. 

\newpage
\bibliographystyle{plainnat}
\bibliography{bib}

\appendix

\section{Algorithms}\label{apx:algorithms}
In this section we provide the pseudocode of the algorithms described throughout the paper.


\begin{algorithm}[ht]
\caption{Building Interpretable Optimal Policy}
\begin{algorithmic}[1] 
\STATE \textbf{Input:} lists $L^1,\ldots L^d$
\STATE \textbf{Output:}  interpretable policy 
\STATE \textcolor{blue}{\# Construct MDP $\cM'$ }
\STATE $\cC \leftarrow L^1 \times \ldots \times L^d$  \textcolor{blue}{\# The states of $\cM'$ }
\STATE $\cA' \leftarrow  \{\text{``go to } j \text{-th adjacent corner''}\}_j$ \textcolor{blue}{\# The actions of $\cM'$ }
\FOR {$c \in \cC$ }
\STATE \textcolor{blue}{\# Construct transition and cost functions of  $\cM'$ }
\STATE $j\leftarrow 0$ 
\FOR {$c'$ adjacent corner to $c$}

\STATE $\rT'(c,j) \leftarrow c'$ \textcolor{blue}{\# $j$th actions of $c$, moves agent to $c'$  }
\STATE $\rC'(c,j) \leftarrow d(c,c')$ \textcolor{blue}{\# the cost of this action  }
\STATE $j \leftarrow j+1$
\ENDFOR
\ENDFOR
\STATE $\cM' \leftarrow \langle\cC, \cA', \rT', \rC' \rangle$
\STATE $V^*_{\cM'},\pi^*_{\cM'}\leftarrow$ optimal value and policy for $\cM'$
\STATE \textbf{return} $\mathtt{build\_grid\_tree}(1,L^1,\ldots,L^d,V^*_{\cM'},\pi^*_{\cM'})$
\end{algorithmic}
\label{alg:main_alg}
\end{algorithm}

\begin{algorithm}[ht]
\caption{Build Grid Tree}
\begin{algorithmic}[1] 
\STATE \textbf{Input:} $i, L^1,\ldots, L^d, V^*_{\cM'},\pi^*_{\cM'}$ 
\STATE \textbf{Output:}  interpretable policy
\IF {$|L^i|>2$}
\STATE \textcolor{blue}{\# If current list is bigger $> 2$: add inner node and continue building the tree recursively }
\STATE $mid \leftarrow \lfloor\frac{|L^i|}{2}\rfloor$
\STATE $node.condition \leftarrow x_i\; ?\; L^i[mid]$ 
\STATE $node.less \leftarrow \mathtt{build\_grid\_tree} (i, L^1,\ldots L^{i-1}, L^{i}[1,\ldots,mid],L^{i+1},\ldots, L^d)$
\STATE $node.eq \leftarrow \mathtt{build\_grid\_tree} (i, L^1,\ldots L^{i-1}, [L^i[mid]],L^{i+1},\ldots, L^d )$
\STATE $node.large\leftarrow \mathtt{build\_grid\_tree} (i, L^1,\ldots L^{i-1}, L^{i}[mid,\ldots,|L^i|],L^{i+1},\ldots, L^d)$
\STATE \textbf{return} node
\ELSIF {$i<d$}
\STATE \textcolor{blue}{\# Alg. is done with the current list. If not the last list, continue to the next one}
\STATE \textbf{return} $\mathtt{build\_grid\_tree}(i+1, L^1,\ldots,L^d)$
\ELSE 
\STATE \textcolor{blue}{\# Now alg. can detect the surface}
\IF {$\forall i, |L^i|=1$ }
\STATE \textcolor{blue}{\# if a corner move to next corner, following policy $\pi^*_{\cM'}$}
\STATE $\textit{leaf.action} = \pi^*_{\cM'}(x)$
\STATE \textbf{return} \textit{leaf}
\ELSE
\STATE \textcolor{blue}{\# if not a corner, choose a corner to move to}
\STATE \textbf{return}
$\mathtt{build\_direction\_tree}( L^1\times\ldots\times L^d, V^*_{\cM'})$
\ENDIF 
\ENDIF
\end{algorithmic}
\label{alg:build_tree}
\end{algorithm}

\begin{algorithm}[ht]
\caption{Build Direction Tree}
\begin{algorithmic}[1] 
\STATE \textbf{Input:} $F, V^*_{\cM'}$ 
\STATE \textbf{Output:} internal node of the interpretable policy
\STATE \textcolor{blue}{\# choose lowest cost path along the special edges} 
\STATE \textcolor{blue}{\# exact direction depends on optimal corner among $F$}
\IF {$|F|=1$}
\STATE $c \leftarrow F[1]$ \textcolor{blue}{\# found optimal corner to move to}
\STATE $\textit{leaf.action} = \texttt{DirectGoTo}(x,c)$
\STATE \textbf{return} $\textit{leaf}$
\ELSE 
\STATE \textcolor{blue}{\# compare cost path of first and second corners in $F$}
\STATE $c^1, c^2 \leftarrow F[1], F[2]$ 
\STATE $\textit{node.condition} \leftarrow d(x,c^1) + V^*_{\cM'}(c^1) \leq d(x,c^2) + V^*_{\cM'}(c^2)$ 
\STATE $\textit{node.left} \leftarrow \mathtt{build\_direction\_tree}(F[1,3,\ldots])$ 
\STATE $\textit{node.right} \leftarrow \mathtt{build\_direction\_tree}(F[2,3,\ldots])$
\STATE \textbf{return} $node$
\ENDIF
\end{algorithmic}
\label{alg:build_direction_tree}
\end{algorithm}
\newpage

\section{Analysis}\label{sec:analysis}

We start with a simplification of \cref{thm:rd_value_function_of_large_and_small_MDP}, and show it holds for any corner. 
\begin{claim}\label{clm:small_mdp_value_similar_corner}
For any corner $c\in\cC$, it holds that  $V^*_{\cM}(c) = V^*_{\cM'}(c)$.  
\end{claim}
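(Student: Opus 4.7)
The plan is to prove the two inequalities $V^*_{\cM}(c) \le V^*_{\cM'}(c)$ and $V^*_{\cM}(c) \ge V^*_{\cM'}(c)$ separately.

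For $V^*_{\cM}(c) \le V^*_{\cM'}(c)$, I would simulate an optimal $\cM'$-policy inside $\cM$. Starting at $c$, this policy produces a sequence of corners $c = c^0, c^1, \ldots, c^m = g$ where each $c^{j+1}$ is adjacent to $c^j$, with total $\cM'$-cost $\sum_j d(c^j, c^{j+1}) = V^*_{\cM'}(c)$. By \cref{clm:direct_access}, $\texttt{DirectGoTo}(c^j, c^{j+1})$ realizes an obstacle-free $\cM$-trajectory of cost $d(c^j, c^{j+1})$ for each adjacent pair. Concatenating these yields a valid $\cM$-trajectory from $c$ to $g$ of total cost exactly $V^*_{\cM'}(c)$.

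For the reverse inequality, I would take any optimal $\cM$-trajectory from $c$ to $g$ of length $T = V^*_{\cM}(c)$ and rewrite it, without increasing cost, as a sequence of adjacent-corner moves (a valid $\cM'$-trajectory from $c$ to $g$). I would follow the backward-induction scheme sketched just after \cref{thm:rd_value_function_of_large_and_small_MDP}. If $g \in S(c)$, then since $S(c)$ is obstacle-free by \cref{lem:surface_not_obstacle}, the move $c \to g$ decomposes into adjacent-corner steps along the edges of the hyperrectangle spanned by $c$ and $g$ at the same total cost $d(c,g)$. Otherwise, identify the last state $s'$ on the trajectory (before $g$) at which some coordinate $s'_i$ lies in $L^i$, and form the corner $c^*$ agreeing with $g$ in every coordinate except $c^*_i = s'_i$. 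Since $c^*, g \in S(s')$, the replacement $s' \to c^* \to g$ is obstacle-free by \cref{lem:surface_not_obstacle}, and the coordinate-wise additivity of the Manhattan distance, $d(s', c^*) + d(c^*, g) = d(s', g)$, ensures the replaced segment is no longer. The suffix $c^* \to g$ is now a corner-to-corner move that decomposes into adjacent-corner steps exactly as in the base case, and the effective goal for the induction becomes the corner $c^*$, reducing to a strictly shorter prefix.

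The main obstacle will be making the backward induction bottom out cleanly. At each iteration the remaining trajectory goes from $c$ to some corner target, and the replacement step moves the target strictly earlier along the original path, so the procedure terminates; moreover, because $c$ is itself a corner, once the remaining target lands in $S(c)$ the base case applies and produces adjacent-corner decomposition. Verifying that all replacements compose into a single coherent corner path, each consecutive pair adjacent (one coordinate differs, consecutive in the relevant $L^i$, no obstacle between them by \cref{lem:surface_not_obstacle}), is the delicate bookkeeping; once done, the resulting chain is a valid $\cM'$-trajectory of cost at most $T$, yielding $V^*_{\cM'}(c) \le V^*_{\cM}(c)$ and closing the proof.
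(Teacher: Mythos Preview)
Your overall plan is the paper's plan: the easy direction simulates an optimal $\cM'$-walk inside $\cM$ via \texttt{DirectGoTo} on adjacent corners, and the hard direction inductively peels an optimal $\cM$-trajectory into corner-to-corner hops. (The paper actually peels from the \emph{front} for this claim---it looks at the first time a coordinate freshly enters its list, splits off a hop $c\to c'$ with $c'_j=c_j$ for $j\neq i$ and $c'_i=s_i$, and recurses from $c'$---whereas you peel from the back, as in the paper's proof of \cref{thm:rd_value_function_of_large_and_small_MDP}. Either direction is workable.)

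The gap is in how you choose $s'$. You define $s'$ as ``the last state on the trajectory (before $g$) at which some coordinate $s'_i$ lies in $L^i$''. But $g$ is a corner, so the penultimate state $x^{T-1}$ agrees with $g$ in $d-1$ coordinates, all of which already lie in their lists; hence for $d\ge 2$ your $s'$ is \emph{always} $x^{T-1}$. Every admissible index $i$ then satisfies $s'_i=g_i$, which forces $c^*=g$, and your induction makes no progress. The informal outline you are following is itself loose on exactly this point; what the appendix proofs actually use is the notion of a \emph{feature-event}: a step at which some coordinate goes from outside $L^i$ to inside $L^i$. Taking the last feature-event (with $i$ the triggering coordinate) is what gives you a nontrivial new corner and a strictly smaller induction parameter, namely the number of feature-events, not an unspecified ``prefix length''. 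With that correction your backward argument becomes the dual of the paper's forward one; without it, the peeling step is vacuous.
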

\begin{proof}
Fix an optimal policy $\pi^*_{\cM}$ for the original MDP $\cM$.
For any corner $c\in\cC$, the inequality $V^*_{\cM'}(c)\geq V^*_{\cM}(c)$ follows immediately as any trajectory in $\cM'$ corresponds to a trajectory in the original MDP with a similar total cost. 
To prove the interesting part,  $V^*_{\cM'}(c)\leq V_{\cM}^*(c)$, we will take the feature-wise point of view. Recall that for each feature $i\in[d],$ there is a list  $L^i\subseteq\R$ that is induced by the obstacles and goal state of the MDP $\cM$. A point is a corner, $c\in\cC$, if for each coordinate $c_i\in L^i$. 

For a trajectory to $g$, $x^0, x^1, ..., x^T=g$, which is induced by $\pi^*_{\cM}$, we say that a \emph{feature-event} happens at time $t<T$ if there is $i\in[d]$ such that $x^{t}_i\notin L^i$ and $x^{t+1}_i\in L^i$. We will prove the claim that for every corner $c\in\cC$ it holds that $V^*_{\cM}(c)=V^*_{\cM'}(c)$, by induction on the number of feature-events in the trajectory from $c$ to $g$. 
Once the agent reaches the goal state the number of feature-events is zero, which proves the basis of the induction. 

We turn to proving the induction step. Fix a corner $c\in\cC$ with $n>0$ feature-events. Focus on the first feature-event, at state $s$. So the path that $\pi^*_{\cM}$ induces is $c \longrightarrow s \longrightarrow g.$ Denote by $D\subseteq[d]$ the features that differ between $s$ and $c$ and by $i\in D$ the feature that caused the feature-event (i.e., $s_i\in L^i$ and for the state $x$  that is immediately before $s$, $x_i\notin L^i$). Denote by $c'$ the corner which is equal to $c$ at all features except $i$, i.e., for $\forall j\neq i$, $c'_j=c_j$ and in the $i$-th coordinate it is equal to $s_i$, i.e., $c'_i=s_i$. 

From the definition of feature-event $c,c'\in S(s).$  
From \cref{clm:direct_access} we know that $s$ can reach $c$ and $c'$ directly. 
 This implies that the trajectory $c \longrightarrow c' \longrightarrow  s \longrightarrow g$ has similar cost as the original trajectory from $c$ to $g$.
Number of feature-events for $c'$ is smaller than $n$. Thus, from the induction hypothesis, $V^*_{\cM}(c') = V^*_{\cM'}(c').$
We proved the claim as the following inequalities hold
\begin{align*}
    V^*_{\cM}(c) & = d(c,c') + V_{\cM}^*(c') \\
    & = d(c,c') + V_{\cM'}^*(c')
    \\
& \geq V_{\cM'}^*(c')
\end{align*}
\end{proof}

A similar proof applies to \cref{thm:rd_value_function_of_large_and_small_MDP}. For completeness, we repeat it here.  
\ValueLargeSmallMDP*
\begin{proof}
We generalize the definitions of $V^*_{\cM'}(s), V^*_{\cM}(s), \pi^*_{\cM}(s)$ to allow an arbitrary corner $c\in\cC$ to be a goal state and denote it as  $V^*_{\cM'}(s,c), V^*_{\cM}(s,c), \pi^*_{\cM}(s,c).$ Note that the two definitions coincide when $c=g$, i.e.,  $V^*_{\cM'}(s,g)=V^*_{\cM'}(s), V^*_{\cM}(s,g)=V^*_{\cM}(s),  \pi^*_{\cM}(s,g)=\pi^*_{\cM}(s).$ We will show a stronger claim: For any state $s\in\cS$ and a corner $c\in\cC$, it holds that 
$$V^*_{\cM}(s,c) = \min_{c' \in S(s)\cap\cC} d(s,c') + V^*_{\cM'}(c',c).$$

 To prove that for any state $s\in\cS$ and corners $c\in\cC, c' \in S(s)\cap\cC$ the inequality  $V^*_{\cM}(s,c) \leq d(s,c') + V^*_{\cM'}(c',c)$  holds, we will show a trajectory from $s$ to $c$ with cost $d(s,c') + V^*_{\cM'}(c',c)$. The trajectory is from $s$ to $c'$ and then from $c'$ to $c$. The cost of the first term is $d(s,c)$, from \cref{lem:surface_not_obstacle}, and the cost of the second term is $V^*_{\cM'}(c',c)$, by \cref{clm:small_mdp_value_similar_corner}.


To prove that the inequality $V^*_{\cM}(s,c) \geq \min_{c' \in S(s)\cap\cC} d(s,c') + V^*_{\cM'}(c',c)$ holds for any state $s\in\cS$ and a corner $c\in\cC$
we take the feature point of view and define the  \emph{feature-event}, similarly to the previous claim. For a trajectory $x^0, x^1, ..., x^T,$ a feature-event happens at time $t<T$ if there is $i\in[d]$ such that $x^{t}_i\notin L^i$ and $x^{t+1}_i\in L^i$.
We will prove the claim by induction on the number of feature events for any $s\in\cS,c\in\cC$ and a  trajectory between them defined by $\pi^*_{\cM}$. 
If the number of feature-events is zero, then the $s=c$ and $S(s)=c$, and the claim immediately follows as $V^*_{\cM}(c,c)=0=d(c,c) + V^*_{\cM'}(c,c)$. 

 To prove the claim by induction on the number of feature-events, we traverse the trajectory from $s$ to $c$ induced by $\pi^*_{\cM}(s,c)$, \emph{backwards} from $c$ to $s.$ Focus on the first (from last) feature-event, at state $s'$. So the trajectory that $\pi^*_{\cM}(s,c)$ induces, $p$, is $s \longrightarrow s' \longrightarrow c.$ Denote by $D\subseteq[d]$ the features that differ between $c$ and $s'$ and by $i\in D$ the feature that caused the feature-event. 

Denote by $c''$ the corner that is the same as $c$ except feature $i$ where it is equal to $s'$. I.e., for all $j\neq i$, $c''_j=c_j$ and $c''_i=s_i.$
We will show that the trajectory $p' = s \longrightarrow s' \longrightarrow  c'' \longrightarrow c$ has the same cost as the original trajectory between $s$ and $c.$ Since no feature-event happened between $s'$ and $c$, there is a direct access between them and  the cost from $s'$ to $c$ is equal to $d(s',c)=\sum_{j\in D}|s'_j-c_j|.$ Denote the cost from $s$ to $s'$, induced by $\pi^*_{\cM}$, by $V$. Then, the cost of $p$ is $V^*_{\cM}(s,c) = V + d(s',c) = V + d(s',c'')+d(c'',c).$
The cost of  $p'$ is $V+d(s',c'')+d(c'',c).$
\begin{align*}
V^*_{\cM}(s,c) & = V + d(s',c) \\
& = V  + d(s',c'') + d(c'',c) \\
& \geq V^*_{\cM}(s,c'') + d(c'',c) 
\end{align*}
The trajectory from $s$ to $c''$ has less feature-events. Thus,
from the induction hypothesis, $V^*_{\cM}(s,c'') =  \min_{c' \in S(s)\cap\cC} d(s,c') + V^*_{\cM'}(c',c'').$
\begin{align*}
V^*_{\cM}(s,c) & \geq \min_{c' \in S(s)\cap\cC} d(s,c') + V^*_{\cM'}(c',c'') + d(c'',c) \\
& \geq  \min_{c' \in S(s)\cap\cC} d(s,c') + V^*_{\cM'}(c',c),
\end{align*}
which is what we wanted to prove. 
\end{proof}

\begin{proof}(of main theorem) 
We need to show that $\tilde{\pi}$ is implemented by Algorithm~\ref{alg:main_alg}. Importantly, each leaf in the grid has the same surface. Thus, finding the best corner, as done in Algorithm~\ref{alg:build_direction_tree}, implements policy $\tilde{\pi}.$ 

The depth of the tree is bounded by the depth of finding the area, which is $O(d\log k)$, and finding the best corner. The depth of finding the best corner is the same as the number of corners in a surface. which is $\times_i L^i$ for the sets $L^i$ corresponding to the leaf with $|L^i|=1$ or $|L^i|=2.$
In the worst case, $|L^i|=2$ for all $i$, so the number of corners is bounded by $2^d$. However, if the starting state is sparse, the bound can be highly improved. Specifically, assume that the starting state has at most $n$ coordinates which are nonzero. Then, add $0$ to each each set $L^i$. And the number of corners of the starting state is only $2^n$. For sparse starting states, $2^n\ll 2^d.$ 

We remark that a generalization of the theorem to nonsparse vectors is possible in the following sense. Suppose there are $O(1)$ vectors, $\cV$, such that every starting state $s$, its Manhattan distance to some vector $v\in \cV$ is at most $n$. Then, results still hold, by adding $\cV$ to the $L^i$ lists. 
\end{proof}


\section{Uniform Noise}\label{sec:noise}
This section proves that adding noise to each state does not alter the optimal policy. Thus, it is enough to focus on a deterministic MDP, as this paper does.  
More formally, fix an MDP $\cM=\langle\cS, \cA, T, C \rangle$. 
The \emph{uniformly noisy MDP} is defined as $\cM^N=\langle\cS,\cA,P,C)$, where \[P(s'|s,a) = (1-\alpha)\cdot \mathbb{I}_{T(s)=s'} + \alpha \cdot \mathbb{I}_{s'=s},\] for some fixed $\alpha\in[0,1]$ and $\mathbb{I}$ is the indicator function. In words, for the uniform noisy MDP, there is a probability of $\alpha$ to stay at the same state and a probability of $1-\alpha$ to follow the deterministic MDP. Same $\alpha$ for every state.
We begin with a claim that will prove helpful later on; if all costs are multiplied by a scalar, the optimal policy does not change. 
\begin{claim}
Fix MDP $\cM=\langle\cS, \cA, T, C, g \rangle$. Suppose that all costs are multiplied by $c>0$. Then, optimal policy for $\cM^c=\langle\cS, \cA, T, c\cdot C, g \rangle$ does not change.
\end{claim}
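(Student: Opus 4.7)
The plan is to prove the stronger statement that for every policy $\pi$ and every state $s$, the value in the scaled MDP satisfies $V^{\pi}_{\cM^c}(s) = c \cdot V^{\pi}_{\cM}(s)$. Once this is established, since $c>0$, scaling by $c$ is a strictly monotone transformation of the set of achievable policy values at each $s$, so it preserves the argmin over policies. In particular, any policy that minimizes $V^{\pi}_{\cM}(s)$ also minimizes $V^{\pi}_{\cM^c}(s)$, and the set of optimal policies is unchanged.

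First I would fix a policy $\pi\colon \cS\to\cA$ and an arbitrary start state $s_0\in\cS$. The dynamics $T$ are identical in $\cM$ and $\cM^c$, so the episode $s_0,s_1,\ldots,s_T$ produced by $\pi$ is the same in both MDPs (in the deterministic setting treated in the paper; if one allows stochasticity the same trajectory distribution is induced). Then I would write the value function in both MDPs as a sum of costs along this trajectory, use linearity of the sum to factor out the constant $c$, and conclude $V^{\pi}_{\cM^c}(s_0) = \sum_{t=0}^{T-1} c\cdot C(s_t,\pi(s_t)) = c \sum_{t=0}^{T-1} C(s_t,\pi(s_t)) = c\cdot V^{\pi}_{\cM}(s_0)$.

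Finally I would take the infimum over policies on both sides. Because $c>0$, for any two policies $\pi_1,\pi_2$ and any state $s$ we have $V^{\pi_1}_{\cM}(s) \le V^{\pi_2}_{\cM}(s) \iff V^{\pi_1}_{\cM^c}(s) \le V^{\pi_2}_{\cM^c}(s)$. Hence $\pi^{*}$ attains the minimum in $\cM$ at every state if and only if it attains the minimum in $\cM^c$ at every state, which is exactly the claim.

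The only point requiring any care is the meaning of ``$V^{\pi}$'' when the episode is infinite (for instance a policy that never reaches $g$); in that case both $V^{\pi}_{\cM}(s)$ and $V^{\pi}_{\cM^c}(s)$ are $+\infty$, and the scaling identity still holds with the convention $c\cdot\infty=\infty$, so such policies are suboptimal in both MDPs and cause no issue. There is no substantive obstacle; the statement is essentially a restatement of linearity of cost accumulation.
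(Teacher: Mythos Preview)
Your proof is correct and follows essentially the same approach as the paper: establish $V^{\pi}_{\cM^c}(s)=c\cdot V^{\pi}_{\cM}(s)$ for every policy and state, then observe that multiplying by a positive constant preserves the argmin over policies. Your version merely spells out the trajectory-sum computation and the infinite-episode edge case in more detail than the paper does.
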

\begin{proof}
Fix a policy $\pi$. Denote by $V^{\pi}$ the value function of $\pi$ for $\cM$ and by $V^{c,\pi}$ the value function of $\pi$ for $\cM^c$. Then, $V^{\pi,c}(s) = c\cdot V^{\pi}(s)$. An optimal policy is one that minimize the value function at each state and  $\argmin_{\pi} V^{c,\pi}(s) = \argmin_{\pi} V^\pi(s)$, thus proving the claim. 
\end{proof}
Now for the main claim for this section. 
\begin{lemma}
If $\pi$ is an optimal policy for $\cM$, then $\pi$ is an optimal policy for $\cM^N$.
\end{lemma}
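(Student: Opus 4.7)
The plan is to show that for every policy $\pi$, the noisy value function is just a rescaling of the deterministic one, namely $V^{N,\pi}(s) = \frac{1}{1-\alpha} V^{\pi}(s)$ for every state $s$ (assuming the nontrivial case $\alpha < 1$; the case $\alpha = 1$ is degenerate since no progress is ever made). Once this identity is established, the preceding claim about scaling costs does the rest: multiplying every policy's value function by the positive constant $\frac{1}{1-\alpha}$ preserves the $\argmin$ over $\pi$, so any optimal policy for $\cM$ is optimal for $\cM^N$ and vice versa.

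To prove the rescaling identity, I would write the Bellman equation for the noisy MDP. Fix $\pi$ and a non-goal state $s$, and let $s' = T(s,\pi(s))$. Using $C(s,\pi(s)) = 1$ and the definition of $P$,
\begin{equation*}
V^{N,\pi}(s) \;=\; 1 + \alpha\, V^{N,\pi}(s) + (1-\alpha)\, V^{N,\pi}(s'),
\end{equation*}
which rearranges to $V^{N,\pi}(s) = \frac{1}{1-\alpha} + V^{N,\pi}(s')$. Unrolling this recursion along the deterministic trajectory $s_0, s_1, s_2, \ldots$ induced by $\pi$ in $\cM$ (which terminates at $g$ after $V^\pi(s_0)$ steps if $V^\pi(s_0)$ is finite) gives $V^{N,\pi}(s_0) = \frac{V^\pi(s_0)}{1-\alpha}$, using $V^{N,\pi}(g) = V^\pi(g) = 0$. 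If $V^\pi(s_0) = \infty$, the same unrolling shows $V^{N,\pi}(s_0) = \infty$ as well, so the identity $V^{N,\pi} = \frac{1}{1-\alpha} V^\pi$ holds in the extended reals.

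The main obstacle is the infinite-horizon subtlety: one must justify the unrolling when $V^\pi$ is possibly infinite, and be careful that the self-loop term $\alpha V^{N,\pi}(s)$ is legitimate to subtract from both sides. This is clean because $V^{N,\pi}(s)$ is either a finite nonnegative real (in which case algebra goes through) or $+\infty$ in exactly the same scenarios as $V^\pi(s) = +\infty$ (the trajectory never reaches $g$, and the noisy process reaches $g$ with probability $0$ under $\pi$). Alternatively, one can bypass the unrolling entirely by observing that the number of trials until an action ``succeeds'' is geometric with mean $\frac{1}{1-\alpha}$ and each trial incurs unit cost, so the total expected cost is $\frac{1}{1-\alpha}$ times the deterministic cost. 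Either presentation yields the identity, after which the previous scaling claim completes the proof.
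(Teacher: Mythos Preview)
Your proposal is correct and follows essentially the same approach as the paper: both derive the Bellman recursion $V^{N}(s)=1+\alpha V^{N}(s)+(1-\alpha)V^{N}(s')$, rearrange it to $V^{N}(s)=\tfrac{1}{1-\alpha}+V^{N}(s')$, and then invoke the preceding cost-scaling claim. Your version is in fact a bit more careful than the paper's, handling the degenerate case $\alpha=1$ and the $V^\pi=\infty$ case explicitly.
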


\begin{proof}
 For every $s\in\cS$, denote by $s'=T(s)$. The Bellman equations for the noisy MDP are
\begin{align*}
 V^{N}(s) & = 1 + \alpha V^{N}(s) +
 (1-\alpha)V^{N}(s')\\
(1-\alpha) V^{N}(s)&= 1 + (1-\alpha)V^{N}(s').\\ 
V^{N}(s) &= 1/(1-\alpha) + V^{N}(s').   
\end{align*}
These are the same equations
for the original MDP besides the cost multiplied by $1/(1-\alpha)$, use previous claim to prove the lemma. 
\end{proof}

As a corollary we have the main theorem for the uniformly noisy setting.  
We are now ready to state the main theorem. 
\begin{theorem}
For every uniformly noisy $(k,d)$-maze there is an optimal interpretable policy $\pi^*:\cS \rightarrow \cA$ with complexity $O(\log k + 2^d)$, which implies that for a constant $d$ it is $O(\log k)$.
\end{theorem}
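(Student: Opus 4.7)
The plan is to derive this theorem as an almost immediate corollary of the two preceding results in the noise appendix together with the main deterministic result, Theorem~\ref{thm:rd_maze_adaptive_interpretable_policy}. Specifically, first I would invoke Theorem~\ref{thm:rd_maze_adaptive_interpretable_policy} applied to the underlying deterministic $(k,d)$-maze $\cM$, which yields an interpretable policy $\pi^*:\cS\to\cA$ that is optimal for $\cM$ and is represented by a decision tree with complexity $O(\log k + 2^d)$ (and $O(\log k+2^n)$ when the starting state is sparse).

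Next I would apply the preceding lemma, which states that any optimal policy for $\cM$ is also an optimal policy for the uniformly noisy counterpart $\cM^N$. Since $\pi^*$ is optimal for $\cM$, it is optimal for $\cM^N$ as well. The key observation is that the proof of that lemma rewrites the Bellman equations for $\cM^N$ as the Bellman equations for $\cM$ with all costs rescaled by $1/(1-\alpha)$, and by the scaling claim immediately above it, rescaling costs by a positive constant preserves the set of optimal policies. So $\pi^*$ transfers unchanged from $\cM$ to $\cM^N$.

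Finally I would observe that the interpretability complexity is a property of the decision tree representing $\pi^*$, not of the dynamics: the tree has the same linear threshold tests at its inner nodes and the same action labels at its leaves regardless of whether the environment is deterministic or noisy. Hence the depth bound $O(\log k + 2^d)$ guaranteed by Theorem~\ref{thm:rd_maze_adaptive_interpretable_policy} carries over verbatim, and specializes to $O(\log k)$ for constant $d$.

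There is essentially no obstacle specific to this theorem; the substantive content lies in Theorem~\ref{thm:rd_maze_adaptive_interpretable_policy} and in the lemma showing invariance of the optimal policy under uniform noise. The only point worth being careful about is that the noise model is truly uniform across states (the same $\alpha$ everywhere), which is exactly the assumption needed for the Bellman rewrite to reduce to a global cost rescaling; under a non-uniform noise model this reduction would fail and the result would not follow in this way.
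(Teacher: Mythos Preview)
Your proposal is correct and matches the paper's approach exactly: the paper presents this theorem as a direct corollary of the preceding lemma (optimal policies for $\cM$ are optimal for $\cM^N$) together with the deterministic main theorem, without even writing out a separate proof. Your additional remarks about the tree being a property of $\pi^*$ rather than the dynamics, and about the necessity of uniform $\alpha$, are accurate elaborations of points the paper leaves implicit.
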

Similarly to the non-noisy setting, if the starting state is sparse with at most $n$ non-zero coordinates, then the depth of the policy is $O(\log k+2^n)$.

\end{document}